\documentclass{article}
\usepackage[dvipsnames]{xcolor}
\usepackage{scalerel}
\usepackage{multirow}
\usepackage[preprint]{corl_2026}
\usepackage[normalem]{ulem}

\usepackage{amsmath,amssymb,amsthm}
\usepackage{mathtools}
\usepackage{xspace}
\usepackage{booktabs}
\usepackage{graphicx}
\usepackage{algorithm}
\usepackage{algpseudocode}

\newtheorem{theorem}{Theorem}

\newtheorem{corollary}{Corollary}

\newcommand{\paper}{\textsc{VINE}\xspace}

\newcommand{\dataset}{\mathcal{D}}

\DeclareMathOperator{\E}{\mathbb{E}}

\usepackage{graphicx}
\usepackage{wrapfig}

\title{VINE: Taming Generative Control Policies for Reinforcement Learning}

\author{%
\textbf{Rushuai Yang}\textsuperscript{1,2} \quad
\textbf{Zhuo Han}\textsuperscript{1} \quad
\textbf{Houlin Li}\textsuperscript{1} \quad
\textbf{Hecheng Wang}\textsuperscript{1} \\
\textbf{Zhichao Wu}\textsuperscript{1} \quad
\textbf{Rui Zhang}\textsuperscript{1} \quad
\textbf{Zhaowei Zhang}\textsuperscript{3} \quad
\textbf{Zihong Chen}\textsuperscript{1} \\ 
\textbf{Xiaohan Yan}\textsuperscript{1} \quad
\textbf{Chiming Liu}\textsuperscript{1,\textdagger} \quad
\textbf{Yi Chen}\textsuperscript{2} \quad
\textbf{Wei Shan}\textsuperscript{1,\textdagger} \quad
\textbf{Maoqing Yao}\textsuperscript{1,\textdagger}
\\
\textsuperscript{1}AgiBot \quad
\textsuperscript{2}The Hong Kong University of Science and Technology \quad
\textsuperscript{3}Peking University \quad
\\
\textsuperscript{\textdagger}Corresponding Author
}

\begin{document}
\maketitle

\begin{abstract}
    Flow-matching policies have emerged as an effective policy parameterization for robot learning. They iteratively generate actions from noise, enabling highly expressive modeling of complex and multimodal action distributions. However, prior works observed that scaling these policies with value-gradient reinforcement learning (RL) often leads to training instability. Existing methods attribute this instability to iterative generation and therefore avoid end-to-end value-gradient optimization by sacrificing iterative generation, high expressiveness, or value-gradient optimization. Contrary to prior belief, we show the instability does not stem from iterative generation itself, but from the vanilla sampling strategy originally designed for behavior cloning, which becomes brittle under value-gradient RL. Motivated by this insight, we propose \paper, an RL-oriented sampling method that enables stable end-to-end value-gradient optimization for flow-matching policies. Instead of following a single flow trajectory, \paper reconstructs a new interpolation state at every denoising step, creating a stable differentiable path for value-gradient propagation while remaining compatible with the original flow-matching denoising process. As a result, \paper preserves the expressiveness and iterative generation of flow-matching without sacrificing end-to-end value-gradient optimization. Despite performing end-to-end backpropagation through all ten denoising steps, \paper achieves stable policy improvement and consistently outperforms state-of-the-art RL methods on the OGBench offline RL benchmark and real-world robotic manipulation task. Videos are available on our website: \href{https://agibottech.github.io/vine/}{https://agibottech.github.io/vine}.
\end{abstract}
\begin{figure}[h]
    \centering
    \includegraphics[width=\textwidth]{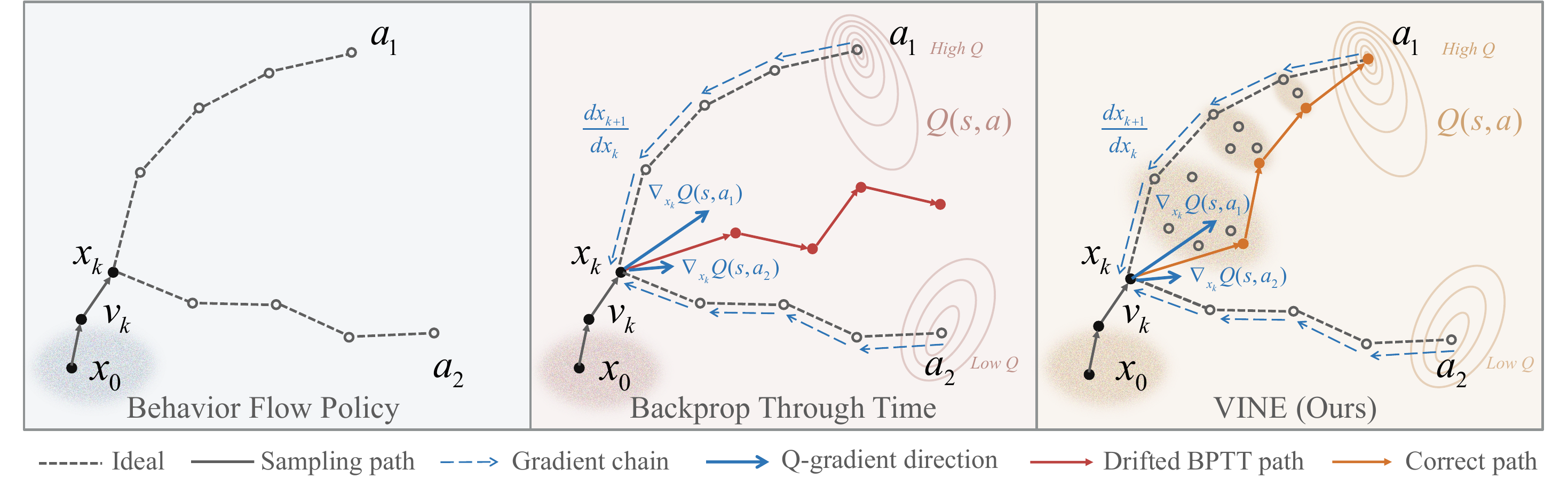}
\caption{\textbf{Left:} A flow policy trained by behavior cloning fits the multi-modal data distribution. \textbf{Middle:} Directly backpropagating the critic gradient through the denoising steps (value BPTT) destabilizes the trajectory. \textbf{Right:} VINE produces a stable denoising trajectory that supports value-gradient BPTT toward $a_1$.}
\end{figure}


\section{Introduction}
\label{sec:intro}
In robotic learning, generative control policies such as diffusion and flow-matching models have achieved remarkable success for behavior cloning in a wide range of manipulation and control tasks~\citep{zhao2023act,chi2023diffusion_policy,pi02024,intelligence2025pi}. In general, they start from randomly sampled noise, and iteratively refine the noise using a learned velocity or score field until an executable action is produced. By combining structured noise injection with iterative refinement, these policies can represent complex and multimodal action distributions with substantially greater expressive power than conventional policy parameterizations~\citep{chi2023diffusion_policy}. Because human demonstrations are not always optimal and do not directly align with reward maximization, prior work has begun to further improve these generative control policies with reinforcement learning~\citep{chen2025pi_,zhang2026reinflow,luo2025precise,yang2026aloe,yang2025dlr,yang2025beyond}. In this paradigm, the policy is trained to maximize expected returns, using a learned value function to estimate future returns as the optimization signal. Yet prior work has found that such approaches suffer from severe training instability when value-gradient optimization is applied to highly expressive generative policies~\citep{zhang2026sacflowsampleefficientreinforcement, qam2026, park2025flowqlearning,psenka2023learning,dhariwal2021diffusionmodelsbeatgans}.

To mitigate this problem, existing work generally attributes this instability to the iterative generation process, and therefore avoids propagating value gradients through the full denoising trajectory~\citep{xu2026reinforcement,liu2026value}. Specifically, prior methods typically adopt one of three strategies: (1) freezing the parameters of the generative control policy and using external guidance signals to steer the denoising dynamics~\citep{psenka2023learning,wagenmaker2025steering,dhariwal2021diffusionmodelsbeatgans,qam2026,fang2025diffusionactorcriticformulatingconstrained}; (2) using (residual) Gaussian policy or distilling multi-step iterative generation into a one-step policy~\citep{park2025flowqlearning,dong2025expo}; and (3) discarding the critic's action gradient entirely and using scalar value estimates as weighting signals only~\citep{peng2019advantage,hansen2023idql,dong2026valueflows}. While these approaches improve training stability, they all introduce fundamental compromises. Strategy (1) sacrifices expressiveness. It does not fully utilize representational capacity of generative model and the learnable component used for RL is restricted. Strategy (2) removes iterative generation, despite iterative refinement being widely recognized as a key advantage of generative policies. Strategy (3) sacrifices value-gradient optimization, which is generally regarded as more efficient \cite{park2025flowqlearning,xu2026rl}. In essence, existing methods often lead to suboptimal performance without addressing challenge arising from the combination of high expressiveness, iterative generation, and value-gradient optimization in reinforcement learning for generative control policies~\citep{dong2025expo,zhang2025gorl}.

In this paper, we revisit the source of training instability in reinforcement learning for generative control policies. We identify one important bottleneck as the noise sampling process, which is poorly aligned with the optimization dynamics of reinforcement learning. Existing noise sampling strategies are primarily designed for behavior cloning. They aim to reconstruct a fixed action distribution, with expressiveness mainly reflected in how well noise can be denoised into a static, often multimodal, data distribution~\citep{chi2023diffusion_policy}. While this design is highly effective for supervised distribution modeling, reinforcement learning requires more than reconstructing a fixed distribution. During RL fine-tuning, the policy is repeatedly optimized through value queries from a learned critic, whose gradients encourage the policy to shift probability mass toward higher-value actions~\citep{qam2026,zhang2026force}. This requires the iterative sampling process to transition smoothly among different high-value modes rather than simply reproduce the behavior distribution. A sampling process designed for static multimodal reconstruction is therefore forced to support a dynamic, value-driven mode-shifting process, creating the instability observed in value-gradient fine-tuning. This observation motivates a reinforcement-learning-oriented redesign of the sampling process. 

Our primary contribution is an RL-oriented sampling method, which we call \paper, for fine-tuning generative control policies with end-to-end value gradients while simultaneously preserving their expressive iterative sampling structure. \paper replaces the standard single-noise flow trajectory with a sequence of reconstructed interpolation states and intermediate noise injection, providing a stable differentiable path for value-gradient backpropagation through the sampler. Our analysis shows that this reconstruction keeps every network query compatible with the original flow-matching training semantics. As a result, \paper can be applied to pretrained flow-matching policies by modifying only the sampler implementation. Empirically, we evaluate \paper on challenging offline RL benchmarks and real-world control task, where it achieves state-of-the-art performance while improving the robustness of value-gradient fine-tuning.

\begin{figure*}[t]
    \centering
    \includegraphics[width=1\textwidth]{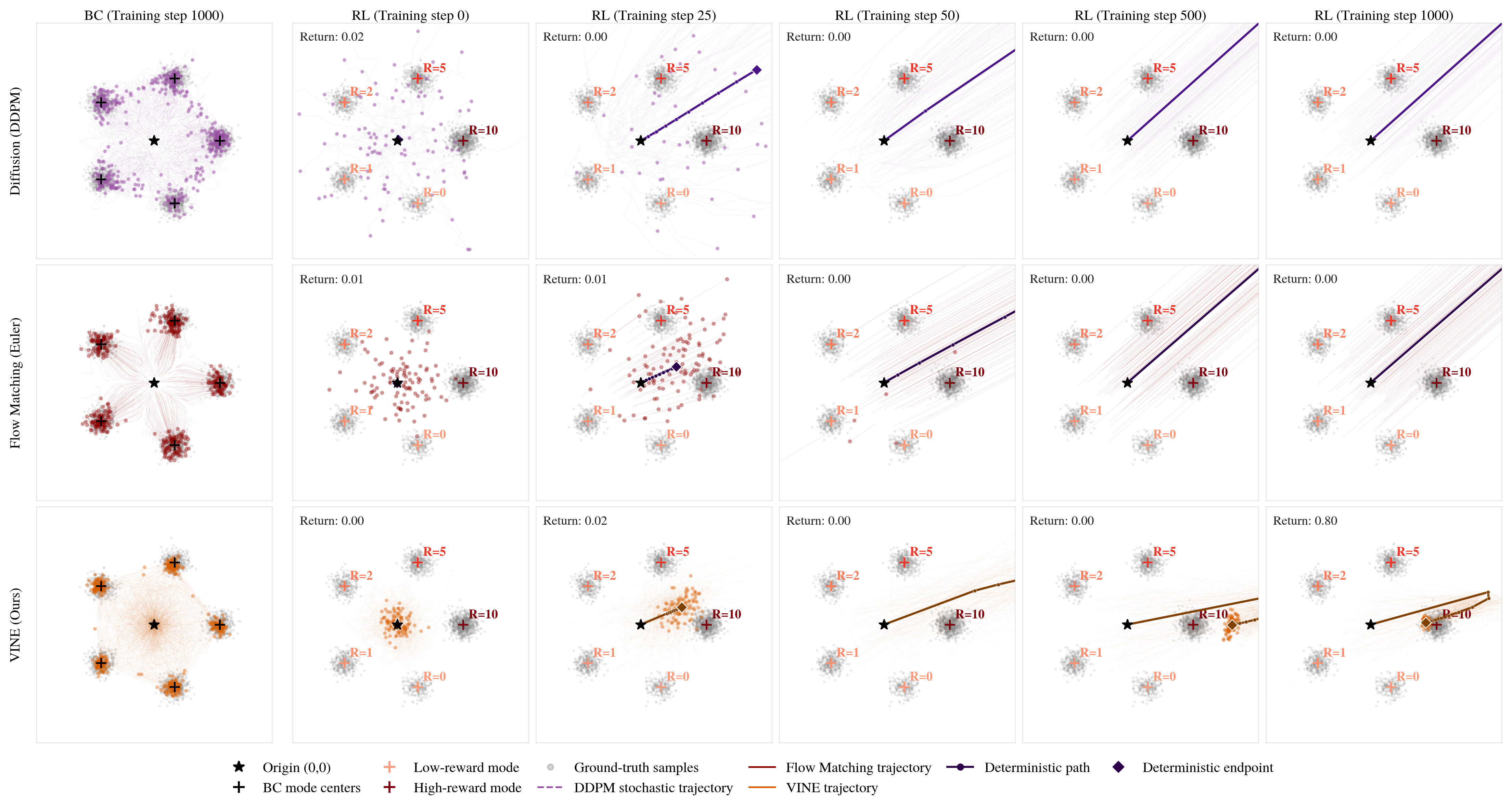}\caption{\textbf{Toy simulation of probability paths induced by different samplers under behavior cloning and offline value-gradient fine-tuning.}
\textbf{Left:} Under behavior cloning, all samplers recover the multimodal data distribution, but Euler trajectories form isolated petal-like paths, DDPM explores broadly with noisy endpoints, and \paper provides structured exploration and broader state coverage.
\textbf{Right:} After assigning different rewards ($R$) to the clusters and fine-tuning with value gradients, standard diffusion and Euler flow-matching samplers are prone to persistent directional errors, whereas \paper enables later refinement steps to correct early deviations and steer the trajectory toward high-value action regions.}
    \label{fig:toy_example}
\end{figure*}

\section{Preliminaries}
\label{sec:preliminaries}

\subsection{Reinforcement Learning and Actor-Critic Training}
\label{sec:prelim_rl}

We consider a Markov Decision Process $\mathcal{M} = (\mathcal{S}, \mathcal{A}, P, \gamma, R, \mu)$, where $\mathcal{S}$ is the state space, $\mathcal{A} = \mathbb{R}^d$ is the continuous action space, $P: \mathcal{S} \times \mathcal{A} \to \Delta(\mathcal{S})$ is the transition function, $\gamma \in [0,1)$ is the discount factor, $R: \mathcal{S} \times \mathcal{A} \to \mathbb{R}$ is the reward function, and $\mu \in \Delta(\mathcal{S})$ is the initial state distribution. We denote by $\mathcal{D} = \{(s_i, a_i, r_i, s'_i)\}_{i=1}^{|\mathcal{D}|}$ the data buffer used for actor-critic training. In offline RL, $\mathcal{D}$ is fixed and pre-collected; in online RL, $\mathcal{D}$ is a replay buffer that is continually expanded through environment interaction. The goal is to learn a policy $\pi_\theta: \mathcal{S} \to \Delta(\mathcal{A})$ that maximizes the expected discounted return $J(\pi_\theta) = \mathbb{E}\left[\sum_{k=0}^{\infty} \gamma^k R(s_k, a_k)\right]$.
To optimize this objective from sampled transitions in $\mathcal{D}$, prior work commonly adopts behavior-regularized actor-critic frameworks~\citep{wu2019behavior, fujimoto2021minimalist, tarasov2023corl}. These methods are simple to implement yet empirically strong, and have been shown to achieve competitive performance on standard offline RL benchmarks~\citep{tarasov2023corl, pmlr-v202-yang23a, pmlr-v235-bai24d, park2025flowqlearning}. A minimalist instantiation minimizes the following critic and actor losses:
\begin{align}
    \mathcal{L}_{\mathrm{critic}}(\phi) &= \mathbb{E}_{(s,a,r,s') \sim \mathcal{D}} \left[\left(Q_\phi(s,a) - r - \gamma Q_{\bar\phi}(s', \pi_\theta(s'))\right)^2\right], \label{eq:critic_loss} \\
    \mathcal{L}_{\mathrm{actor}}(\theta) &= -\mathbb{E}_{s \sim \mathcal{D}} \left[Q_\phi(s, \pi_\theta(s))\right] + \alpha\, \mathbb{E}_{(s,a) \sim \mathcal{D}} F(a, \pi_\theta(s)), \label{eq:actor_loss}
\end{align}
where $Q_\phi$ is the critic parameterized by $\phi$, which estimates the expected return of taking action $a$ in state $s$, and $Q_{\bar{\phi}}$ is the corresponding target critic used for stable bootstrapping. The function $F(a, \pi_\theta(s))$ measures the discrepancy between the dataset action $a$ and the policy action $\pi_\theta(s)$; The actor loss maximizes the critic-estimated value of the policy action, regularized by a behavior-cloning penalty that limits deviation from the dataset action. Its optimization requires differentiating through the policy $\pi_\theta$, allowing the critic's action gradient $\nabla_a Q_\phi(s,a)$ to update the policy parameters. For one-step policies, such as Gaussian actors whose network output directly parameterizes the final action $a$, actor-critic training is well studied. However, for multi-step generative policies, the critic's action gradient must be backpropagated through the entire generation process, making policy optimization substantially more challenging, which we discuss next.

\subsection{Generative Control Policies}
Generative control policies represent the action distribution by transforming a simple noise source into an executable action through an iterative generation process conditioned on the state $s$. Two representative constructions of this paradigm are flow matching~\citep{lipman2022flow, liu2022flow, albergo2022building} and diffusion models~\citep{ho2020denoising, song2020score}. Their multi-step generation process gives policies enough expressiveness to model complex and multimodal action distributions, making them particularly attractive for robot learning from diverse demonstrations~\citep{chi2023diffusion_policy}. Flow matching parameterizes a state-conditioned policy with a time-dependent velocity field $v_\theta(x,t;\,s)$ that transports noise into actions. In the continuous-time view, this transport is described by
\begin{align}
    \mathrm{d}\hat{X}_t = v_\theta(\hat{X}_t, t;\, s)\,\mathrm{d}t, \qquad \hat{X}_0 \sim \mathcal{N}(0, I_d).
    \label{eq:flow_ode}
\end{align}
The velocity field is trained with the conditional flow matching objective~\citep{lipman2022flow}:
\begin{align}
    \mathcal{L}_{\mathrm{FM}}(\theta) =
    \mathbb{E}_{t \sim \mathcal{U}[0,1],\, z \sim \mathcal{N}(0,I),\, a \sim p_1(\cdot \mid s)}
    \left[\|v_\theta(x_t, t;\, s) - (a - z)\|^2\right],
    \label{eq:fm_loss}
\end{align}
where $ x_t = t\,a + (1-t)\,z$ is the noisy interpolation between a data action $a$ and noise $z$ at time $t$. At inference, the standard flow policy generates an action by discretizing the ODE into $K$ Euler steps from initial noise $x_0 \sim \mathcal{N}(0,I_d)$:
\begin{align}
    x_{k+1} = x_k + \tfrac{1}{K}\, v_\theta(x_k, t_k;\, s), \qquad a = x_K.
    \label{eq:euler_inference}
\end{align}
Thus, the same initial noise sample is refined along a single denoising trajectory until it becomes the final action. Diffusion policies also perform iterative denoising, gradually transforming noise into actions through a sequence of intermediate states. While they are often trained as score- or noise-prediction models with a prescribed noise schedule, suitable parameterizations can rewrite them as velocity-field models~\citep{song2020score, kingma2024understanding}. We adopt the flow-matching formulation because it gives a direct velocity-field view and typically enables faster generation with fewer steps. Nevertheless, \paper is a sampling-level redesign and can be applied beyond flow matching to iterative generative policies such as diffusion policies.

\subsection{The Instability Challenge of Applying RL on Generative Control Policies.}
When a policy is parameterized as a generative control policy, value-gradient optimization requires backpropagation through multiple denoising steps. Recall that rewards are assigned to state-action pairs, and the critic trained by Eq.~\eqref{eq:critic_loss} evaluates completed actions in the action space. In standard actor-critic training, $Q_\phi(s,a)$ assigns value to a final action $a$. However, in a generative control policy, this action is produced through an internal iterative denoising process rather than a single network evaluation. For a flow policy, the final action is the endpoint $a=x_K$ of the Euler sampler in Eq.~\eqref{eq:euler_inference}. Updating the velocity network with the actor objective therefore requires propagating the critic's action gradient through every denoising step. By the chain rule, the actor gradient can be written as
\begin{align}
    \label{eq:bptt}
    \nabla_\theta Q_\phi(s,a)
    =
    \sum_{k=0}^{K-1}
    \tfrac{1}{K}\,
    \nabla_a Q_\phi(s,a)
    \frac{\partial x_K}{\partial x_{k+1}}
    \frac{\partial v_\theta(x_k,t_k;\,s)}{\partial \theta},
    \qquad a=x_K.
\end{align}
This expression shows that each velocity update is driven by a gradient derived from the final action-value and is propagated backward through all subsequent denoising steps. As discussed in prior work, such direct value-gradient optimization is often unstable for multi-step generative policies~\citep{shi2026flowdpgdeterministicpolicygradient}. Existing methods therefore avoid the full BPTT problem either by removing $\nabla_a Q_\phi$ from the policy update or by shortening the generation horizon $K$~\citep{zhang2026reinflow,psenka2023learning,park2025flowqlearning}. In contrast, a key insight is that the chain fundamentally depends on the sampler, which determines the denoising path. We therefore next discuss how to construct a stable denoising path for actor updates.

\begin{algorithm}[t]
\caption{\paper: Actor-Critic Training}
\label{alg:vine}
\begin{algorithmic}[1]
\Function{Generate}{$s$} 
    \State $\hat{a}_0 \sim \mathcal{N}(0, I_d)$
    \For{$k = 0, \ldots, K - 1$} \hfill\textcolor{gray}{$\triangleright$ Iterative generation}
        \State $t_k \gets k/K$
        \State \textcolor{red}{\sout{$x_k \gets x_k + \tfrac{1}{K}\, v_\theta(x_k, t_k;\, s)$} \hfill$\triangleright$ Replace Euler Method}
        \State $z_k \sim \mathcal{N}(0, I_d)$
        \State $\hat{x}_k \gets t_k\, \hat{a}_{k} + (1 - t_k)\, z_k$
        \State $\hat{a}_{k+1} \gets \hat{x}_k + (1 - t_k)\, v_\theta(\hat{x}_k, t_k;\, s)$
    \EndFor
    \State \Return $\hat{a}_K$
\EndFunction
\While{not converged}
    \State Collect transitions with $\Call{Generate}{s}$ and add to $\dataset$ \hfill\textcolor{gray}{$\triangleright$ Optionally for online RL}
    \State Sample batch $\{(s, a, r, s')\} \sim \dataset$
    \State $a' \gets \Call{Generate}{s'}$ 
    \State Update $\phi$ to minimize $\E[(Q_\phi(s, a) - r - \gamma Q_{\bar\phi}(s', a'))^2]$ \hfill\textcolor{gray}{$\triangleright$ Train critic $Q_\phi$}
    \State $\hat{a}_K \gets \Call{Generate}{s}$
    \State Update $\theta$ to minimize $-Q_\phi(s, \hat{a}_K) + \alpha \| \hat{a}_K - a \|^2$ \hfill\textcolor{gray}{$\triangleright$ Train velocity $v_\theta$ via BPTT} 

\EndWhile
\State \Return policy $\pi_\theta(s) \equiv \Call{Generate}{s}$
\end{algorithmic}
\end{algorithm}

\section{\paper: Value-gradient Iterative Noise Exploration}
\label{sec:method_noise}
To make value-gradient propagation through the sampling chain more stable, we reconsider how the sampler transports one intermediate state to the next. The key observation is that flow-matching training adds noise to each target action by sampling many noise levels $t$ and noise draws $z$. The velocity field therefore learns to reach the same action from many possible paths, whereas the Euler ODE sampler in Eq.~\eqref{eq:euler_inference} follows only one of them. This gives us the freedom to construct a different sampling path. \paper uses this freedom to build an RL-oriented sampler: at each step, it reconstructs a noisy intermediate state around the current action estimate, then applies the same velocity field to produce a refined estimate. Recall that in standard flow-matching training, noise is introduced by interpolating between a ground-truth action and a Gaussian noise sample: $x_t=t\,a+(1-t)z$. If we have a current action estimate $\hat{a}_k$ predicted before the $k$ denoising step, the natural way to inject noise while staying consistent with flow matching is to use the same interpolation form:
\begin{equation}
    \hat{x}_{k}
    =
    t_{k}\,\hat{a}_k + (1 - t_{k})\,z_{k}, \quad z_{k} \sim \mathcal{N}(0, I_d).
    \label{eq:vine_interpolation}
\end{equation}
We denote the reconstructed interpolation state by $\hat{x}_k$ to distinguish it from the Euler state $x_k$, the noise $z_{k}$ is sampled independently at each step. We take $0 \leq t_1 < \cdots < t_K \leq 1$ to be a monotonically increasing time schedule, so that $\hat{x}_{k}$ is progressively dominated by the current action estimate $\hat{a}_k$ as $k$ grows. Note that $\hat{x}_{k}$ is a fresh noisy interpolation state used as the network input, rather than a state obtained in Eq.~\eqref{eq:euler_inference}. The remaining question is how to recover an action estimate from this noisy state during RL rollout, where the ground-truth endpoint is unavailable. From a probabilistic view, the flow-matching objective trains $v_\theta$ to predict the conditional velocity from a noisy interpolation state. Under the optimal velocity $v^\star$, the transformed quantity $x_t+(1-t)v^\star(x_t,t;\,s)$ estimates the posterior mean of the final action. Applying this closed form to $\hat{x}_{k}$ yields,
\begin{equation}
    \hat{a}_{k+1}=\hat{x}_{k}+(1 - t_{k})\,v_\theta(\hat{x}_{k}, t_{k};\,s),
    \label{eq:vine_step}
\end{equation}
Starting from an initial action estimate $\hat a_0\sim\mathcal{N}(0, I_d)$,
we recursively compute the interpolation state with Eq.\eqref{eq:vine_interpolation} and endpoint prediction with Eq.\eqref{eq:vine_step} so the final action becomes $a = \hat{a}_K$. This sampling method has two practical consequences. First, each network query remains compatible with the flow-matching training semantics: the input is a noisy interpolation state, and the velocity field is used in its form. Thus, \paper changes the sampling path without requiring a new generative objective or an auxiliary guidance model; we provide a more formal justification in Appendix~\ref{sec:method_theory}. Second, fresh noise is injected locally around the current action estimate at every step. This makes stochasticity act as structured local exploration during refinement, rather than letting a single initial noise sample determine the entire action trajectory as in Eq.~\eqref{eq:euler_inference}. Moreover, since $t_k$ is monotonically increasing, the noise weight $(1-t_k)$ decreases across steps, moving the sampler from coarse stochastic exploration toward finer deterministic refinement of the final action.

To understand the sampling process, we visualize this behavior with a toy simulation in noise space in Fig.~\ref{fig:toy_example}. Under behavior cloning, both the flow-matching and \paper are trained using the same flow-matching objective in Eq.~\eqref{eq:fm_loss}, while the DDPM baseline is trained with the standard diffusion loss. At inference, we sample many initial points from $\mathcal{N}(0,I)$ and trace their trajectories using three samplers: Euler, a DDPM sampler, and \paper. Euler trajectories form isolated petal-like paths with few bridges between modes. DDPM explores more broadly, but its stochastic transitions introduce endpoint-prediction error. In contrast, \paper traverses the full pentagon with bridging paths between neighboring modes, yielding broader state coverage while preserving accurate endpoints. This coverage is useful under critic-gradient fine-tuning, where the desired action may shift from one mode to another.
\section{Related Work}
\label{sec:related_work}
\textbf{Gaussian and one-step actors for RL.}
Classical actor-critic methods typically parameterize the policy with a Gaussian, deterministic, or residual actor, making value-gradient optimization straightforward because the critic's action gradient is backpropagated through a single network evaluation~\citep{wu2019behavior, fujimoto2021minimalist, tarasov2023corl,pmlr-v32-silver14}. This simplicity makes Gaussian-style actors strong baselines for offline RL, including behavior-regularized methods~\citep{pmlr-v97-fujimoto19a, NEURIPS2019_c2073ffa}. Recent work applies a similar principle to generative policies by shortening the sampling chain, using consistency models, shortcut models, or one-step flow policies before applying value-gradient optimization~\citep{ding2023consistency, chen2024diffusion, park2025flowqlearning, espinosa2025scaling, chen2025one}. These approaches improve stability by simplifying the policy computation graph, but they sacrifice the iterative refinement and expressive multimodal modeling that motivate generative control policies.

\textbf{RL fine-tuning of diffusion and flow policies.}
Diffusion and flow-matching policies have been widely adopted in robot learning and reinforcement learning because they can represent complex multimodal action distributions~\citep{chi2023diffusion_policy, ze20243d, pi02024, barreiros2026careful}. The central challenge is how to optimize these multi-step generative policies with a learned critic~\citep{zhu2024diffusionmodelsreinforcementlearning,uehara2024understandingreinforcementlearningbasedfinetuning}. Existing methods differ mainly in where the value signal is applied relative to the denoising chain. \emph{Around-the-chain methods} avoid direct BPTT through the full sampler by constructing objectives around the denoising process. Advantage- or value-weighted methods avoid critic action gradients by reweighting generative-model objectives with scalar critic values~\citep{zhang2025energyweighted, edp_kang2023, qvpo_ding2024}. Other methods convert value, energy, or guidance signals into denoising objectives or guidance terms, steering generation without directly optimizing through the complete sampling chain~\citep{psenka2023learning, fang2025diffusionactorcriticformulatingconstrained, lu2023contrastive, frans2025diffusion}. Adjoint matching~\citep{domingo-enrich2025adjoint} transports terminal gradients backward to provide step-wise supervision, and QAM~\citep{qam2026} applies this idea to offline RL. Related variants make different approximation and complexity trade-offs~\citep{uehara2024fine, bergmeister2025ram, guo2026deterministic, shin2026efficient}. \emph{Outside-the-chain methods} improve actions after or outside the base sampling process through value-guided selection~\citep{hansen2023idql, dong2025expo}, residual corrections in action or noise space~\citep{ankile2024imitation, yuan2024policy, wagenmaker2025steering}, or gradient-based action editing~\citep{mark2024policy}. These methods can improve sampled actions, but the value signal acts outside the internal denoising dynamics. \emph{Through-the-chain methods} directly differentiate through the full denoising process~\citep{dql_wang2023, diffcps_he2023, entropydql_zhang2024,zhang2026sacflowsampleefficientreinforcement}. This is the most direct way to use critic action gradients, but prior work has found it unstable for expressive multi-step policies. In contrast, \paper keeps the full iterative chain and supports stable value-gradient BPTT by redesigning the noise injection process, rather than bypassing the chain, shortening it, or applying value signals only outside it.
\begin{table*}[t]
    \centering
    \makebox[\textwidth]{\scalebox{0.63}{
    \begin{tabular}{lccccccccccccccc}
\toprule
 & \multicolumn{1}{c}{\textbf{Gaussian}}
 & \multicolumn{4}{c}{\textbf{Around the Chain}}
 & \multicolumn{7}{c}{\textbf{Outside the Chain}}
 & \multicolumn{3}{c}{\textbf{Through the Chain}} \\
\cmidrule(lr){2-2}\cmidrule(lr){3-6}\cmidrule(lr){7-13}\cmidrule(lr){14-16}
\textbf{Task Category}
 & \texttt{ReBRAC}
 & \texttt{FQL} & \texttt{FAWAC} & \texttt{IFQL} & \texttt{QAM} & \texttt{DAC} & \texttt{QSM} & \texttt{CGQL} & \texttt{CGQL-M} & \texttt{CGQL-L}
 & \texttt{DSRL} & \texttt{FEdit}
 & \texttt{FBRAC} & \texttt{BAM} & \texttt{\textcolor{orange}{VINE}} \\
\midrule
\texttt{antmaze-large}
 & $\overset{\scaleto{[94,95]}{3pt}}{94}$
 & $\overset{\scaleto{[72,79]}{3pt}}{76}$ & $\overset{\scaleto{[15,19]}{3pt}}{17}$ & $\overset{\scaleto{[32,39]}{3pt}}{36}$ & $\overset{\scaleto{[78,84]}{3pt}}{81}$ & $\overset{\scaleto{[86,90]}{3pt}}{88}$ & $\overset{\scaleto{[89,93]}{3pt}}{91}$ & $\overset{\scaleto{[73,80]}{3pt}}{76}$ & $\overset{\scaleto{[68,73]}{3pt}}{71}$ & $\overset{\scaleto{[62,67]}{3pt}}{65}$
 & $\overset{\scaleto{[56,66]}{3pt}}{61}$ & $\overset{\scaleto{[53,62]}{3pt}}{58}$
 & $\overset{\scaleto{[1,4]}{3pt}}{2}$ & $\overset{\scaleto{[82,85]}{3pt}}{84}$ & $\overset{\scaleto{[98,100]}{3pt}}{\mathbf{99}}$ \\
\texttt{antmaze-giant}
 & $\overset{\scaleto{[53,60]}{3pt}}{57}$
 & $\overset{\scaleto{[0,0]}{3pt}}{0}$ & $\overset{\scaleto{[0,0]}{3pt}}{0}$ & $\overset{\scaleto{[0,2]}{3pt}}{1}$ & $\overset{\scaleto{[14,22]}{3pt}}{18}$ & $\overset{\scaleto{[11,21]}{3pt}}{16}$ & $\overset{\scaleto{[11,17]}{3pt}}{15}$ & $\overset{\scaleto{[0,2]}{3pt}}{0}$ & $\overset{\scaleto{[1,8]}{3pt}}{4}$ & $\overset{\scaleto{[1,6]}{3pt}}{3}$
 & $\overset{\scaleto{[1,4]}{3pt}}{3}$ & $\overset{\scaleto{[1,3]}{3pt}}{2}$
 & $\overset{\scaleto{[0,0]}{3pt}}{0}$ & $\overset{\scaleto{[0,2]}{3pt}}{1}$ & $\overset{\scaleto{[68,83]}{3pt}}{\mathbf{76}}$ \\
\texttt{humanoidmaze-medium}
 & $\overset{\scaleto{[65,74]}{3pt}}{69}$
 & $\overset{\scaleto{[63,73]}{3pt}}{68}$ & $\overset{\scaleto{[22,26]}{3pt}}{24}$ & $\overset{\scaleto{[85,87]}{3pt}}{86}$ & $\overset{\scaleto{[64,69]}{3pt}}{67}$ & $\overset{\scaleto{[81,85]}{3pt}}{83}$ & $\overset{\scaleto{[80,86]}{3pt}}{83}$ & $\overset{\scaleto{[57,62]}{3pt}}{60}$ & $\overset{\scaleto{[40,43]}{3pt}}{42}$ & $\overset{\scaleto{[57,67]}{3pt}}{62}$
 & $\overset{\scaleto{[48,57]}{3pt}}{53}$ & $\overset{\scaleto{[20,23]}{3pt}}{22}$
 & $\overset{\scaleto{[37,41]}{3pt}}{39}$ & $\overset{\scaleto{[58,62]}{3pt}}{60}$ & $\overset{\scaleto{[79,93]}{3pt}}{\mathbf{87}}$ \\
\texttt{humanoidmaze-large}
 & $\overset{\scaleto{[15,20]}{3pt}}{17}$
 & $\overset{\scaleto{[7,11]}{3pt}}{9}$ & $\overset{\scaleto{[0,0]}{3pt}}{0}$ & $\overset{\scaleto{[21,27]}{3pt}}{24}$ & $\overset{\scaleto{[9,14]}{3pt}}{11}$ & $\overset{\scaleto{[0,0]}{3pt}}{0}$ & $\overset{\scaleto{[9,11]}{3pt}}{10}$ & $\overset{\scaleto{[4,5]}{3pt}}{5}$ & $\overset{\scaleto{[3,8]}{3pt}}{6}$ & $\overset{\scaleto{[5,8]}{3pt}}{6}$
 & $\overset{\scaleto{[2,5]}{3pt}}{3}$ & $\overset{\scaleto{[2,3]}{3pt}}{3}$
 & $\overset{\scaleto{[0,0]}{3pt}}{0}$ & $\overset{\scaleto{[4,8]}{3pt}}{5}$ & $\overset{\scaleto{[36,56]}{3pt}}{\mathbf{46}}$ \\
\texttt{scene-sparse}
 & $\overset{\scaleto{[61,69]}{3pt}}{65}$
 & $\overset{\scaleto{[77,80]}{3pt}}{78}$ & $\overset{\scaleto{[35,41]}{3pt}}{38}$ & $\overset{\scaleto{[83,85]}{3pt}}{84}$ & $\overset{\scaleto{[96,98]}{3pt}}{97}$ & $\overset{\scaleto{[65,70]}{3pt}}{68}$ & $\overset{\scaleto{[84,87]}{3pt}}{86}$ & $\overset{\scaleto{[36,40]}{3pt}}{38}$ & $\overset{\scaleto{[72,76]}{3pt}}{74}$ & $\overset{\scaleto{[85,91]}{3pt}}{88}$
 & $\overset{\scaleto{[99,100]}{3pt}}{\mathbf{99}}$ & $\overset{\scaleto{[60,65]}{3pt}}{62}$
 & $\overset{\scaleto{[43,57]}{3pt}}{50}$ & $\overset{\scaleto{[97,99]}{3pt}}{98}$ & $\overset{\scaleto{[60,85]}{3pt}}{73}$ \\
\texttt{puzzle-3x3-sparse}
 & $\overset{\scaleto{[73,84]}{3pt}}{79}$
 & $\overset{\scaleto{[60,78]}{3pt}}{70}$ & $\overset{\scaleto{[2,3]}{3pt}}{3}$ & $\overset{\scaleto{[100,100]}{3pt}}{\mathbf{100}}$ & $\overset{\scaleto{[99,100]}{3pt}}{\mathbf{100}}$ & $\overset{\scaleto{[62,75]}{3pt}}{68}$ & $\overset{\scaleto{[49,57]}{3pt}}{53}$ & $\overset{\scaleto{[39,55]}{3pt}}{48}$ & $\overset{\scaleto{[100,100]}{3pt}}{\mathbf{100}}$ & $\overset{\scaleto{[83,96]}{3pt}}{90}$
 & $\overset{\scaleto{[82,92]}{3pt}}{87}$ & $\overset{\scaleto{[98,100]}{3pt}}{99}$
 & $\overset{\scaleto{[0,1]}{3pt}}{0}$ & $\overset{\scaleto{[48,64]}{3pt}}{56}$ & $\overset{\scaleto{[93,97]}{3pt}}{95}$ \\
\texttt{cube-double}
 & $\overset{\scaleto{[8,10]}{3pt}}{9}$
 & $\overset{\scaleto{[43,49]}{3pt}}{46}$ & $\overset{\scaleto{[2,2]}{3pt}}{2}$ & $\overset{\scaleto{[10,12]}{3pt}}{11}$ & $\overset{\scaleto{[62,66]}{3pt}}{64}$ & $\overset{\scaleto{[33,36]}{3pt}}{35}$ & $\overset{\scaleto{[53,58]}{3pt}}{56}$ & $\overset{\scaleto{[36,41]}{3pt}}{38}$ & $\overset{\scaleto{[39,43]}{3pt}}{41}$ & $\overset{\scaleto{[43,47]}{3pt}}{45}$
 & $\overset{\scaleto{[72,76]}{3pt}}{\mathbf{74}}$ & $\overset{\scaleto{[37,43]}{3pt}}{40}$
 & $\overset{\scaleto{[0,0]}{3pt}}{0}$ & $\overset{\scaleto{[44,50]}{3pt}}{47}$ & $\overset{\scaleto{[1,6]}{3pt}}{4}$ \\
\texttt{cube-triple}
 & $\overset{\scaleto{[0,1]}{3pt}}{1}$
 & $\overset{\scaleto{[2,4]}{3pt}}{3}$ & $\overset{\scaleto{[0,0]}{3pt}}{0}$ & $\overset{\scaleto{[0,0]}{3pt}}{0}$ & $\overset{\scaleto{[3,4]}{3pt}}{3}$ & $\overset{\scaleto{[3,6]}{3pt}}{5}$ & $\overset{\scaleto{[3,4]}{3pt}}{3}$ & $\overset{\scaleto{[7,9]}{3pt}}{\mathbf{8}}$ & $\overset{\scaleto{[7,9]}{3pt}}{\mathbf{8}}$ & $\overset{\scaleto{[7,9]}{3pt}}{\mathbf{8}}$
 & $\overset{\scaleto{[1,2]}{3pt}}{1}$ & $\overset{\scaleto{[2,3]}{3pt}}{2}$
 & $\overset{\scaleto{[0,1]}{3pt}}{0}$ & $\overset{\scaleto{[2,5]}{3pt}}{3}$ & $\overset{\scaleto{[0,1]}{3pt}}{1}$ \\
\midrule
\texttt{all} (40 tasks)
 & $\overset{\scaleto{[37,60]}{3pt}}{49}$
 & $\overset{\scaleto{[31,56]}{3pt}}{44}$ & $\overset{\scaleto{[6,16]}{3pt}}{10}$ & $\overset{\scaleto{[31,55]}{3pt}}{43}$ & $\overset{\scaleto{[42,68]}{3pt}}{55}$ & $\overset{\scaleto{[33,57]}{3pt}}{45}$ & $\overset{\scaleto{[37,62]}{3pt}}{50}$ & $\overset{\scaleto{[24,44]}{3pt}}{34}$ & $\overset{\scaleto{[30,56]}{3pt}}{43}$ & $\overset{\scaleto{[34,57]}{3pt}}{46}$
 & $\overset{\scaleto{[35,61]}{3pt}}{48}$ & $\overset{\scaleto{[24,48]}{3pt}}{36}$
 & $\overset{\scaleto{[5,20]}{3pt}}{12}$ & $\overset{\scaleto{[32,56]}{3pt}}{44}$ & $\overset{\scaleto{[55,65]}{3pt}}{\mathbf{60}}$ \\
\bottomrule
\end{tabular}}}
   \caption{\textbf{Offline RL results on OGBench.} We follow the QAM evaluation protocol~\cite{qam2026}. Around the Chain denotes methods that update the generative control policy with gradient information but avoid BPTT through the full sampler; Outside the Chain denotes methods that freeze the generative policy and optimize actions externally; Through the Chain denotes methods that directly perform BPTT through the sampler. \paper achieves better performance on 40 tasks in total.}
    \label{tab:offline_results}
\end{table*}
\section{Experiments}
\label{sec:experiments}
We conduct experiments to study how well \paper optimizes generative control policies on both benchmark tasks and real-world manipulation. Concretely, we ask three research questions: (1) Can \paper optimize policies effectively compared to representative offline RL methods on OGBench? (2) Can \paper make flow-matching stable for real-world online RL? (3) Why VINE helps?
\subsection{Experimental Setup}
\textbf{Offline RL setting.}
We first evaluate \paper in the offline RL setting on eight domains from OGBench, each containing five tasks: antmaze-large, antmaze-giant, humanoidmaze-medium, humanoidmaze-large, scene-sparse, puzzle-3x3-sparse, cube-double, and cube-triple. The antmaze and humanoidmaze domains test long-horizon navigation from diverse offline trajectories, while the scene, puzzle, and cube domains require solving sparse-reward manipulation-style tasks. For offline RL baselines, we compare against representative Gaussian, flow, and diffusion policy methods, grouped by how they propagate value-gradient updates through or around the sampling chain. These include ReBRAC~\cite{tarasov2023corl}, FQL and FAWAC~\cite{park2025flowqlearning}, IFQL~\cite{hansen2023idql}, QAM~\cite{qam2026}, DAC~\cite{fang2025diffusionactorcriticformulatingconstrained}, QSM~\cite{psenka2023learning}, CGQL variants~\cite{lu2023contrastive}, DSRL~\cite{wagenmaker2025steering}, FEdit~\cite{dong2025expo}, FBRAC~\cite{park2025flowqlearning}, and BAM~\cite{domingo-enrich2025adjoint,qam2026}.

\textbf{Real-world online RL setting.}
We further evaluate \paper on a real-world socket insertion task to assess whether stable value-gradient optimization can transfer to physical robotic manipulation. In this task, the robot must pick up a plug from the table and insert it into a socket, requiring precise contact-rich control under tight insertion tolerances. We evaluate each method over 20 trials and report the success rate, wall-clock fine-tuning time, and the fraction of trajectories requiring human intervention. We compare against representative real-world reinforcement learning baselines, including SAC-Flow~\cite{zhang2026sacflowsampleefficientreinforcement}, Hil-SERL~\cite{luo2025precise}, DSRL~\cite{wagenmaker2025steering}, RLT~\cite{xu2026rl}, and EXPO~\cite{dong2025expo}. All methods receive images and proprioceptive states as inputs and directly predict low-level actions or action chunks. We divide the baselines into two categories according to their training paradigms. DSRL, RLT, and EXPO leverage pretrained policies. Specifically, we initialize these methods from $\pi_{0.5}$ and perform BC warm-up using 15 pre-collected human demonstrations before online RL. In contrast, Hil-SERL, SAC-Flow, and VINE are trained without pretrained initialization. For Hil-SERL, SAC-Flow, and VINE, we use a ResNet-10 encoder to extract visual features. SAC-Flow follows flow-matching objective but employs a transformer-based action head, whereas Hil-SERL and VINE use a three-layer MLP action head for single action prediction. For fair comparison, SAC-Flow, and VINE all use 10 denoising steps during action generation.

\subsection{Can \paper optimize effectively compared to representative methods on OGBench?}
Table~\ref{tab:offline_results} reports results on 40 OGBench tasks across eight domains. Following QAM \cite{qam2026}, we train each task with 12 seeds and report the mean performance with 95\% confidence intervals computed by bootstrapping with 5000 samples. \paper achieves the best aggregate score, with especially large gains on long-horizon navigation domains such as \texttt{antmaze-giant} and \texttt{humanoidmaze-large}. These results suggest that \paper can stably optimize expressive generative policies with value gradients in offline RL.
\subsection{Can \paper make flow-matching stable for real-world online RL?}
We evaluate \paper on a real-world socket insertion task as shown in Figure~\ref{fig:failure_case}, where the robot must pick up a plug and insert it into a fixed socket under tight contact tolerances. We compare against baselines under human-in-the-loop real-world RL setting~\citep{luo2025precise,xu2026rl}. Human-in-the-loop means human operator intervenes and provides corrections during autonomous execution. We report success rate, training time, and human-intervention ratio in Table~\ref{tab:real}. \paper achieves the highest success rate with the shortest fine-tuning time and the lowest human-intervention ratio among the online methods.

\begin{figure}
    \centering
    \includegraphics[width=1\linewidth]{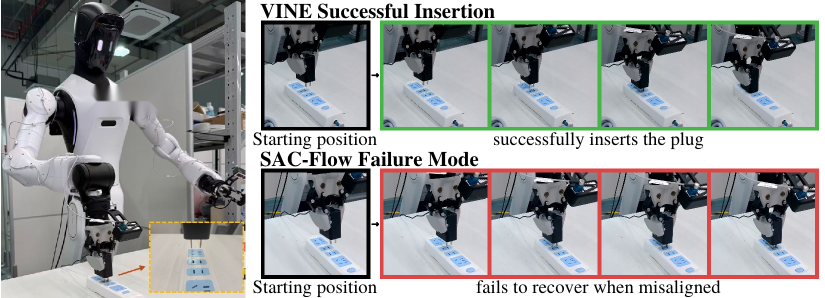}
    \caption{\textbf{Real-world online RL setting}. \paper learns an policy that insert successfully from any location. SAC-Flow fails to learn a successful policy during online RL.}
    \label{fig:failure_case}
\end{figure}
\begin{table}
\centering
\small
\caption{\textbf{Online RL results on a real-world human-in-the-loop manipulation task.}
We evaluate plug insertion and report success rate over 20 trials, online RL fine-tuning time, and the fraction of online rollout steps requiring human intervention.}
\label{tab:real}
\vspace{-2mm}
\begin{tabular}{lccccccc}
\toprule
\textbf{Plug Insertion} &
\begin{tabular}[c]{@{}c@{}}
BC init.
\end{tabular} &
DSRL &
RLT &
EXPO &
SAC-Flow &
Hil-SERL &
\textcolor{orange}{VINE} \\
\midrule
\begin{tabular}[c]{@{}l@{}}
Success rate ($\uparrow$) \\
Time ($\downarrow$)
\end{tabular} &
\begin{tabular}[c]{@{}c@{}}10/20\\--\end{tabular} &
\begin{tabular}[c]{@{}c@{}}17/20\\50 min\end{tabular} &
\begin{tabular}[c]{@{}c@{}}17/20\\50 min\end{tabular} &
\begin{tabular}[c]{@{}c@{}}19/20\\50 min\end{tabular} &
\begin{tabular}[c]{@{}c@{}}12/20\\20 min\end{tabular} &
\begin{tabular}[c]{@{}c@{}}16/20\\20 min\end{tabular} &
\begin{tabular}[c]{@{}c@{}}\textbf{20/20}\\\textbf{20 min}\end{tabular} \\
Human-intervention ($\downarrow$) &
-- &
-- &
29.7\% &
56.3\% &
74.6\% &
55.9\% &
\textbf{19.2\%} \\
\bottomrule
\end{tabular}
\end{table}
\vspace{-3mm}
\begin{figure}
\centering
\includegraphics[width=0.9\linewidth]{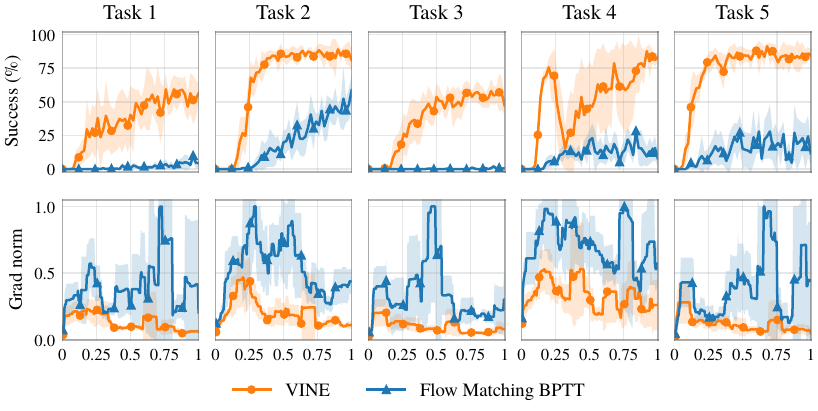}
\vspace{-5mm}
\caption{\textbf{Sampling ablation.} We compare VINE against the vanilla Euler solver for flow-matching on OGBench. VINE consistently achieves higher success rates and more stable backpropagated gradients.}
\label{fig:ablation-stochastic-actor-update}
\end{figure}

\subsection{Why VINE Helps?}
\label{sec:method_iteration}
We use ablations to isolate two design choices in \paper: per-step stochastic interpolation and iterative endpoint-prediction. Both are evaluated under the same actor-critic training protocol as the full method, changing only the component under study.

\textbf{VINE stabilizes value-gradient backpropagation.}
We compare the standard Euler sampler used by flow matching with VINE. The Euler sampler samples noise only once at initialization and follows a fixed deterministic denoising trajectory throughout inference, whereas VINE reconstructs a noisy interpolation state by injecting fresh Gaussian noise, at every denoising step. As shown in Fig.~\ref{fig:ablation-stochastic-actor-update}, VINE consistently achieves substantially higher success rates across all five OGbench tasks. Moreover, VINE maintains significantly smaller and more stable backpropagated gradients throughout training, whereas the Euler sampler exhibits frequent large gradient spikes that correlate with unstable optimization. These results demonstrate that VINE stabilizes end-to-end value-gradient propagation through the denoising chain, leading to both more robust optimization and stronger policy performance.

\textbf{Iterative generation improves action quality from coarse to fine.}
We evaluate different intermediate denoising steps with $K = 1,4,6,8,10$. Figure~\ref{fig:iterative_refinement} shows that early intermediate actions are insufficient for solving the long-horizon AntMaze task. With only 1 or 4 denoising steps, the agent largely remains near the start region and fails to reach the goal. VINE does not merely produce a good action in the first few iterations; instead, the later refinement steps are essential for transforming an initially poor action proposal into a task-completing action.
\begin{figure*}
    \centering
    \includegraphics[width=0.9\linewidth]{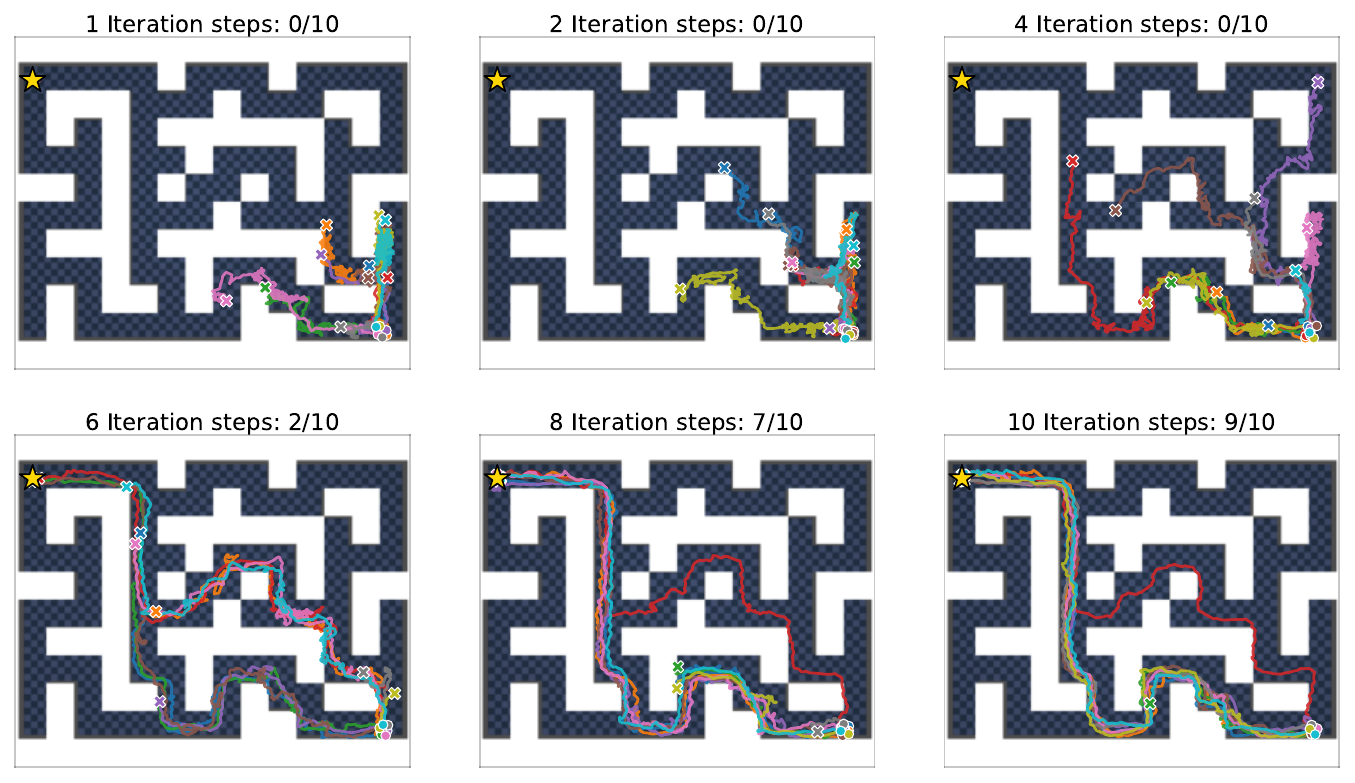}
    \vspace{-4mm}
    \caption{\textbf{Preserving the Iterative Refinement.} We generate the final action using different numbers of denoising steps $K$ and evaluate the resulting policies on the AntMaze task. As the number of refinement iterations increases, the generated action quality improves, leading to higher task success rates.}
    \label{fig:iterative_refinement}
\end{figure*}

\vspace{-2mm}
\section{Conclusion}
\label{sec:conclusion}
In this paper, we identify the training instability of value-gradient RL with expressive generative policies originates in part from the sampler and proposed \paper, an RL-oriented sampling method that enables stable value-gradient optimization toward expressive policies. \paper remains compatible with pretrained flow-matching policies. Empirically, \paper achieves stable policy improvement with a $10$-step denoising process and consistently outperforms state-of-the-art baselines on both OGBench and real-robot task. A promising future direction is to extend \paper to large foundation models and adapt variant RL algorithms for broader validation.

\newpage

\bibliography{main}
\newpage
\appendix
\section{Appendix}

\subsection{Compatibility with Pretrained Flow Policies}
\label{sec:method_theory}
A natural question is whether \paper can leverage an existing flow-matching pretrained policy. The following theorem justifies that the same iterative procedure would also apply if a pretrained FM field were used as the velocity model.

\begin{theorem}[FM-VINE Consistency]
\label{thm:fm_vine}
Let $v^*(x, t;\, s)$ be the pointwise minimizer of the conditional flow matching loss $\mathcal{L}_{\mathrm{FM}}$ (Eq.~\ref{eq:fm_loss}).
Then for any query point $x \in \mathbb{R}^d$ and time $t \in (0,1)$,
\begin{equation}
    x + (1-t)\, v^*(x, t;\, s) = \mathbb{E}[a \mid x_t = x,\, t,\, s],
\end{equation}
regardless of how $x$ was constructed.
In particular, our endpoint prediction $\hat{a}_k = \hat{x}_k + (1-t_k)\, v^*(\hat{x}_k, t_k;\, s) = \mathbb{E}[a \mid \hat{x}_k, t_k, s]$ is the Bayes-optimal action estimate given the noisy intermediate $\hat{x}_k$, even though $\hat{x}_k$ was not generated by the FM ODE.
\end{theorem}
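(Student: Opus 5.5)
The plan is to identify the pointwise minimizer of $\mathcal{L}_{\mathrm{FM}}$ as a conditional expectation and then rearrange the interpolation identity. Fix $s$ and $t\in(0,1)$. Under the joint law $a\sim p_1(\cdot\mid s)$, $z\sim\mathcal{N}(0,I)$ independent, the random variable $x_t=t\,a+(1-t)z$ has a density $p_t(\cdot\mid s)$. Because $z$ is Gaussian and $1-t>0$, this density is strictly positive on all of $\mathbb{R}^d$; this is what makes ``any query point $x$'' meaningful. The loss in Eq.~\eqref{eq:fm_loss} is an integral over $t$ and $x_t$ of the conditional mean-squared error $\mathbb{E}\big[\|v(x_t,t;s)-(a-z)\|^2 \mid x_t=x,t,s\big]$. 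Since $v$ may be chosen freely at each $(x,t,s)$, minimizing pointwise gives the standard $L^2$ projection
\begin{equation*}
v^*(x,t;s)=\mathbb{E}[a-z\mid x_t=x,t,s].
\end{equation*}
I would justify this by the usual bias--variance decomposition: for any $c$, $\mathbb{E}[\|c-Y\|^2\mid x]=\|c-\mathbb{E}[Y\mid x]\|^2+\mathrm{Var}(Y\mid x)$.

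Next I would eliminate $z$ using the interpolation constraint. On the event $x_t=x$ we have $z=(x-t\,a)/(1-t)$, so
\begin{equation*}
a-z=\frac{a-x}{1-t}.
\end{equation*}
Taking conditional expectations gives $v^*(x,t;s)=\big(\mathbb{E}[a\mid x_t=x,t,s]-x\big)/(1-t)$. Multiplying by $1-t$ and adding $x$ yields the claimed identity $x+(1-t)v^*(x,t;s)=\mathbb{E}[a\mid x_t=x,t,s]$.

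The phrase ``regardless of how $x$ was constructed'' then follows directly. Both sides are deterministic functions of $(x,t,s)$ alone, and the conditional expectation is taken under the training interpolation law, which has full support. So substituting $x=\hat{x}_k$ and $t=t_k$ from Eq.~\eqref{eq:vine_interpolation} produces the endpoint prediction in Eq.~\eqref{eq:vine_step}. Bayes-optimality is the statement that the conditional mean minimizes squared error, by the same projection fact as above.

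The main obstacle is rigor about the pointwise minimizer. Conditional expectations are defined only almost everywhere, so I must fix a version that holds at every $x$. I would do this with the explicit formula
\begin{equation*}
\mathbb{E}[a\mid x_t=x]=\frac{\int a\,\mathcal{N}\big(x;\,ta,(1-t)^2I\big)\,p_1(\mathrm{d}a\mid s)}{\int \mathcal{N}\big(x;\,ta,(1-t)^2I\big)\,p_1(\mathrm{d}a\mid s)}.
\end{equation*}
Under a finite first moment of $p_1$, this expression is continuous and well-defined everywhere. I would then define $v^*$ through it, so that it is the continuous minimizer.

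I would also note two caveats in the writeup. First, ``Bayes-optimal given $\hat{x}_k$'' is relative to the training prior $a\sim p_1$, not to the actual distribution of $\hat{a}_k$. Second, the endpoint $t=1$ is excluded because the division by $1-t$ breaks down there.
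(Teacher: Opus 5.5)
Your proposal is correct and follows the same route as the paper's proof: you identify $v^*(x,t;s)$ as the conditional mean of the target $a-z$, rewrite that target as $(a-x_t)/(1-t)$, and rearrange to get $x+(1-t)\,v^*(x,t;s)=\mathbb{E}[a\mid x_t=x,t,s]$. Two points in your write-up go beyond what the paper makes explicit: you fix a continuous, everywhere-defined version of the conditional expectation through the explicit Gaussian-likelihood formula, and you note that ``Bayes-optimal'' is relative to the training prior $p_1$ rather than the actual law of the VINE iterates.
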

This theorem implies that a flow-matching pretrained policy can be used as the velocity field in \paper---for both inference (using VINE's iterative generation) and post-training (fine-tuning with value gradients via BPTT )without changing the generation rule.
\begin{corollary}[Contraction under Deterministic VINE]
\label{cor:contraction}
Suppose $p(a \mid s) = \mathcal{N}(\mu_s, \sigma_a^2 I)$.
Under deterministic VINE ($z_k = 0$, $\hat{a}_0 = 0$) with optimal velocity $v^*$, the action estimates satisfy the recursion $\hat{a}_k = \lambda_k\, \hat{a}_{k-1} + (1-\lambda_k)\, \mu_s$ with contraction factor
\begin{equation}
    \lambda_k = \frac{t_k^2 \sigma_a^2}{t_k^2 \sigma_a^2 + (1-t_k)^2} \in (0,1),
\end{equation}
giving $\|\hat{a}_k - \mu_s\| = \prod_{j=1}^{k}\lambda_j \cdot \|\mu_s\| \to 0$.
\end{corollary}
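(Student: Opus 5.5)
The plan is to reduce the corollary to Theorem~\ref{thm:fm_vine}, which identifies the endpoint prediction $x+(1-t)\,v^*(x,t;\,s)$ with the posterior mean $\mathbb{E}[a\mid x_t=x,\,t,\,s]$ for \emph{any} query point $x$. Once the update is written as a posterior mean, the Gaussian assumption $p(a\mid s)=\mathcal{N}(\mu_s,\sigma_a^2 I)$ makes everything explicit. Since $x_t=t\,a+(1-t)\,z$ with $z\sim\mathcal{N}(0,I_d)$ independent of $a$, the pair $(a,x_t)$ is jointly Gaussian. Its moments are
\begin{equation*}
\mathbb{E}[x_t]=t\mu_s,\qquad \mathrm{Cov}(a,x_t)=t\sigma_a^2 I,\qquad \mathrm{Cov}(x_t)=\bigl(t^2\sigma_a^2+(1-t)^2\bigr)I .
\end{equation*}
The standard Gaussian conditioning formula then gives
\begin{equation*}
\mathbb{E}[a\mid x_t=x]=\mu_s+\frac{t\sigma_a^2}{t^2\sigma_a^2+(1-t)^2}\,(x-t\mu_s).
\end{equation*}
This closed form is the main computational step. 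I expect it to be routine, but I would double-check it by verifying that it is the affine minimizer of the squared error, since it is exactly the pointwise optimum of $\mathcal{L}_{\mathrm{FM}}$ rewritten through Theorem~\ref{thm:fm_vine}.

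Next I substitute the deterministic VINE input. With $z_k=0$, Eq.~\eqref{eq:vine_interpolation} reduces to $\hat x = t\,\hat a_{\mathrm{prev}}$. Plugging this into the posterior mean gives
\begin{equation*}
\mu_s+\frac{t^2\sigma_a^2}{t^2\sigma_a^2+(1-t)^2}\,(\hat a_{\mathrm{prev}}-\mu_s)=\lambda\,\hat a_{\mathrm{prev}}+(1-\lambda)\,\mu_s ,
\end{equation*}
with $\lambda$ as stated. I will align indices with the corollary's convention, so that the step producing $\hat a_k$ from $\hat a_{k-1}$ uses time $t_k$ and factor $\lambda_k$. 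The bound $\lambda_k\in(0,1)$ holds whenever $t_k\in(0,1)$, because numerator and denominator are both positive and the denominator exceeds the numerator by $(1-t_k)^2>0$. I will remark that the endpoint cases are degenerate: $t=0$ gives $\lambda=0$, and $t=1$ gives $\lambda=1$, where Theorem~\ref{thm:fm_vine} is not asserted.

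Subtracting $\mu_s$ from the recursion gives $\hat a_k-\mu_s=\lambda_k(\hat a_{k-1}-\mu_s)$. Unrolling from $\hat a_0=0$ yields $\hat a_k-\mu_s=-\prod_{j=1}^k\lambda_j\,\mu_s$, and taking norms gives the stated identity. The step I expect to need care is the final claim ``$\to 0$''. With finitely many steps the product is only a number strictly less than $1$. I would therefore state the convergence as holding along a schedule with infinitely many steps and $\prod_j\lambda_j\to 0$. A clean sufficient condition is that the $t_j$ stay bounded away from $1$, so that $\lambda_j\le\bar\lambda<1$; more generally it suffices that $\sum_j(1-\lambda_j)=\infty$. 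Under either condition, the geometric (respectively product) bound gives the limit.
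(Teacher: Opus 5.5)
Your proposal is correct and follows the paper's proof step for step. Theorem~\ref{thm:fm_vine} turns the update into the Gaussian posterior mean (your form $\mu_s+\frac{t\sigma_a^2}{t^2\sigma_a^2+(1-t)^2}(x-t\mu_s)$ is algebraically the paper's $\frac{t\sigma_a^2\,x+(1-t)^2\mu_s}{t^2\sigma_a^2+(1-t)^2}$), after which you substitute $\hat{x}_k=t_k\hat{a}_{k-1}$ and unroll the error recursion from $\hat{a}_0=0$. Your extra care about the limit is warranted: the paper concludes $\prod_j\lambda_j\to 0$ merely from ``each $\lambda_j<1$'', which fails if $t_j\to 1$ fast enough that $\sum_j(1-t_j)^2<\infty$, whereas your condition $\sum_j(1-\lambda_j)=\infty$ is exactly the right criterion.
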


Since $\lambda_k < 1$ for all $t_k < 1$, deterministic VINE contracts monotonically to $\mu_s$.
The bulk of convergence occurs at early steps where $t_k$ is small and $(1-t_k)^2$ dominates the denominator, making $\lambda_k \approx 0$.

\subsection{Proof of Theorem~\ref{thm:fm_vine}}
\begin{proof}
Under the conditional flow matching objective with linear interpolation paths, the training input at time $t$ is $x_t = (1-t)\,z + t\,a$, where $z \sim \mathcal{N}(0, I)$ and $a \sim p(a \mid s)$.
The conditional velocity target is $a - z = \frac{a - x_t}{1-t}$.

The FM loss is a pointwise squared error:
\begin{align}
    \mathcal{L}_{\mathrm{FM}}(\theta) = \mathbb{E}_{t, z, a}\left[\left\|v_\theta(x_t, t;\, s) - \frac{a - x_t}{1-t}\right\|^2\right].
\end{align}
For any fixed query point $x$ and time $t$, the pointwise minimizer of the squared loss is the conditional expectation:
\begin{align}
    v^*(x, t;\, s) = \mathbb{E}\left[\frac{a - x_t}{1-t}\,\bigg|\, x_t = x,\, t,\, s\right] = \frac{\mathbb{E}[a \mid x_t = x, t, s] - x}{1-t}.
\end{align}
Therefore:
\begin{align}
    x + (1-t)\, v^*(x, t;\, s) = x + \mathbb{E}[a \mid x_t = x, t, s] - x = \mathbb{E}[a \mid x_t = x, t, s].
\end{align}
This holds for any $x \in \mathbb{R}^d$, regardless of its origin.
In particular, setting $x = \hat{x}_k = t_k\, \hat{a}_{k-1} + (1-t_k)\, z_k$ gives
\begin{align}
    \hat{a}_k = \hat{x}_k + (1-t_k)\, v^*(\hat{x}_k, t_k;\, s) = \mathbb{E}[a \mid x_{t_k} = \hat{x}_k, t_k, s],
\end{align}
which is the Bayes-optimal action estimate under squared error loss, conditioned on the noisy intermediate $\hat{x}_k$.
\end{proof}

\subsection{Proof of Corollary~\ref{cor:contraction}}

\begin{proof}
Let $p(a \mid s) = \mathcal{N}(\mu_s, \sigma_a^2 I)$ and consider deterministic VINE with $z_k = 0$ and $\hat{a}_0 = 0$.
The noisy intermediate is $\hat{x}_k = t_k\, \hat{a}_{k-1}$.

Under the linear interpolation $x_t = (1-t)\,z + t\,a$ with $z \sim \mathcal{N}(0, I)$ and $a \sim \mathcal{N}(\mu_s, \sigma_a^2 I)$, the joint $(x_t, a)$ is Gaussian.
The conditional posterior is:
\begin{align}
    p(a \mid x_t = x, t) = \mathcal{N}\left(\frac{t\,\sigma_a^2\, x + (1-t)^2\, \mu_s}{t^2\sigma_a^2 + (1-t)^2},\; \Sigma_{\mathrm{post}}\right),
\end{align}
where we used $\mathrm{Var}[x_t] = t^2\sigma_a^2 + (1-t)^2$ (isotropic components) and the standard linear-Gaussian conditioning formula.
The posterior covariance $\Sigma_{\mathrm{post}}$ is irrelevant for the mean.

By Theorem~\ref{thm:fm_vine}, endpoint prediction with optimal velocity gives:
\begin{align}
    \hat{a}_k = \mathbb{E}[a \mid x_{t_k} = \hat{x}_k, t_k] = \frac{t_k\,\sigma_a^2\, \hat{x}_k + (1-t_k)^2\, \mu_s}{t_k^2\sigma_a^2 + (1-t_k)^2}.
\end{align}
Substituting $\hat{x}_k = t_k\, \hat{a}_{k-1}$:
\begin{align}
    \hat{a}_k = \frac{t_k^2\,\sigma_a^2}{t_k^2\sigma_a^2 + (1-t_k)^2}\, \hat{a}_{k-1} + \frac{(1-t_k)^2}{t_k^2\sigma_a^2 + (1-t_k)^2}\, \mu_s.
\end{align}
Define $\lambda_k \coloneqq \frac{t_k^2\sigma_a^2}{t_k^2\sigma_a^2 + (1-t_k)^2}$.
Then $\hat{a}_k = \lambda_k\, \hat{a}_{k-1} + (1 - \lambda_k)\, \mu_s$.

Since $0 < \lambda_k < 1$ for all $t_k \in (0,1)$, this is a contraction toward $\mu_s$.
With $\hat{a}_0 = 0$, the error evolves as:
\begin{align}
    \hat{a}_k - \mu_s = \lambda_k\,(\hat{a}_{k-1} - \mu_s) = \cdots = \left(\prod_{j=1}^k \lambda_j\right)(\hat{a}_0 - \mu_s) = -\left(\prod_{j=1}^k \lambda_j\right) \mu_s.
\end{align}
Hence $\|\hat{a}_k - \mu_s\| = \prod_{j=1}^k \lambda_j \cdot \|\mu_s\| \to 0$ since each $\lambda_j < 1$.
\end{proof}

\subsection{Implementation Details}
In this section, we describe the full implementation details of VINE.
We use a step count of $K{=}10$ across all tasks, with the fixed MIP grid
$t \in \{0.0, 0.1, \ldots, 0.9\}$. Following the implementations in FQL, we train two Q functions to improve stability.
We take the mean of the two Q values for the Q loss term in the actor objective. For the critic target, we use the minimum of the two Q values (clipped double Q-learning) on all reported OGBench domains. The actor is an MLP with hidden sizes $[512, 512, 512, 512]$ and GELU activations
\citep{hendrycks2016gelu}. The critic is a twin ResNet value network with width $512$ (depth $4$ by default; depth $2$ on \texttt{humanoidmaze} and \texttt{cube} domains), with layer normalization \citep{ba2016layer} to stabilize training. This critic backbone matches our FQL baseline for fair comparison. We train offline VINE with 1M gradient steps for state-based OGBench tasks, and evaluate the agent every 20k steps using 50 episodes. We report the average success rates following the official evaluation scheme~\cite{qam2026},

\textbf{Hyperparameters.} We refer to Tables~\ref{tab:fql_hyperparameters} and~\ref{tab:fql_bc_alpha} for the complete list of hyperparameters.

\begin{table}
\centering
\caption{Hyperparameters for VINE.}
\label{tab:fql_hyperparameters}
\begin{tabular}{@{}p{0.40\linewidth}p{0.6\linewidth}@{}}
\toprule
\textbf{Hyperparameter} & \textbf{Value} \\
\midrule
Learning rate & 0.0003 \\
Optimizer & Adam \citep{kingma2015adam} \\
Gradient steps & 1000000 (default) \\
Minibatch size & 512 \\
Actor MLP dimensions & $[512, 512, 512, 512]$ \\
Critic architecture & ResNet (width $512$; depth $4$ / $2$) \\
Nonlinearity & GELU \citep{hendrycks2016gelu} \\
Target network smoothing coefficient & 0.005 \\
Discount factor $\gamma$ & 0.995 \\
Flow steps $K$ & 10 \\
MIP time grid & $\{0.0, 0.1, \ldots, 0.9\}$ \\
Clipped double Q-learning & True \\
Actor noise std (train) & 1.0 \\
BC coefficient $\alpha$ & Table~\ref{tab:fql_bc_alpha} \\
\bottomrule
\end{tabular}
\end{table}

\begin{table}[t]
\centering
\caption{VINE BC coefficient $\alpha$ for each task.}
\label{tab:fql_bc_alpha}
\small
\begin{tabular}{@{}p{0.5\linewidth}p{0.3\linewidth}@{}}
\toprule
\textbf{Task} & \textbf{VINE $\alpha$} \\
\midrule
\texttt{antmaze-large} & 10 \\

\texttt{antmaze-giant} & 10 \\

\texttt{humanoidmaze-medium} & 30 \\

\texttt{humanoidmaze-large} & 20 \\

\texttt{scene-sparse} & 300 \\

\texttt{puzzle-3x3-sparse} & 1000 \\

\texttt{cube-double} & 300 \\
\texttt{cube-triple} & 300 \\
\bottomrule
\end{tabular}
\end{table}

\begin{figure*}[t]
\centering
\setlength{\tabcolsep}{2pt}
\renewcommand{\arraystretch}{0.85}

\begin{tabular}{c}
\scriptsize\texttt{antmaze-large} \\
\includegraphics[width=0.95\textwidth]{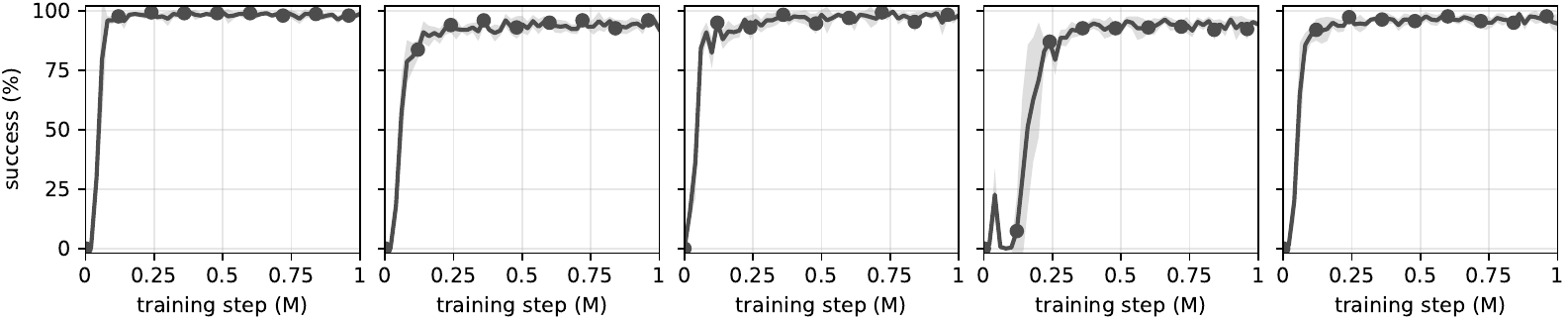} \\[6pt]
\scriptsize\texttt{antmaze-giant} \\
\includegraphics[width=0.95\textwidth]{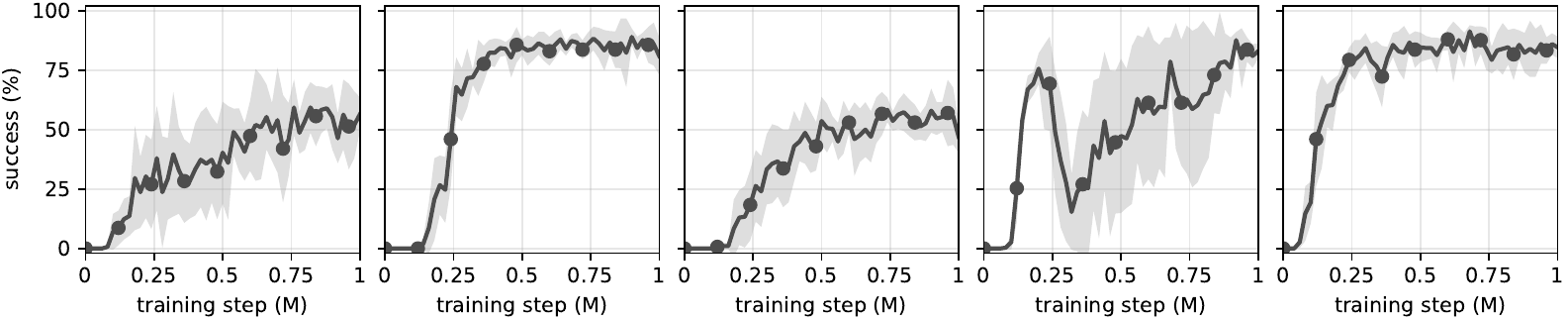} \\[6pt]
\scriptsize\texttt{humanoidmaze-medium} \\
\includegraphics[width=0.95\textwidth]{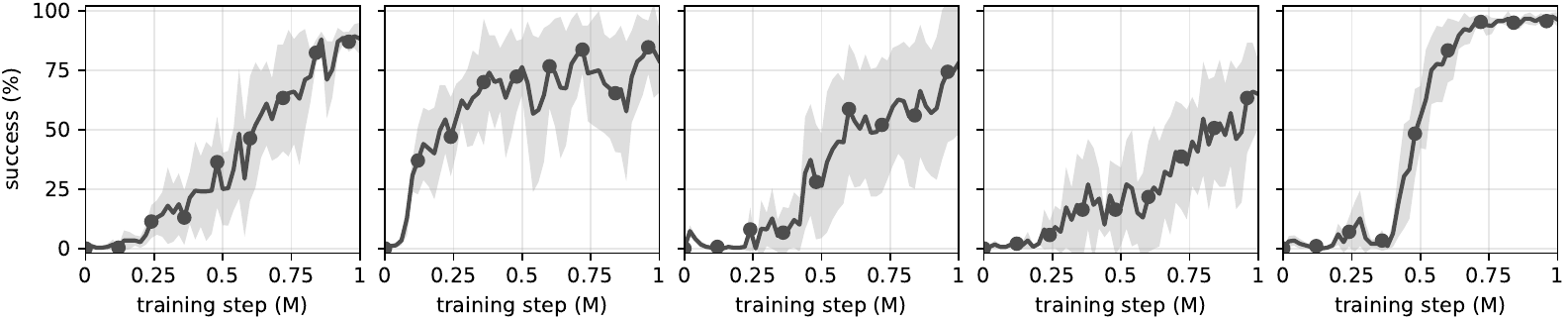} \\[6pt]
\scriptsize\texttt{humanoidmaze-large} \\
\includegraphics[width=0.95\textwidth]{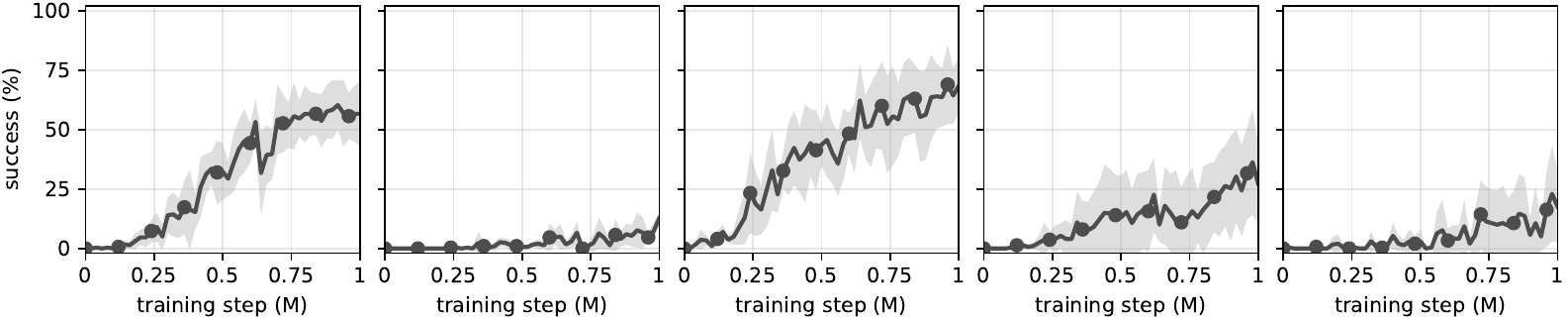} \\
\end{tabular}
\caption{VINE training curves (success rate vs.\ training steps), maze navigation domains.
Each row is one domain; the five columns are task1--task5. Curves show the mean over seeds with a 95\% confidence band.}
\label{fig:vine_training_curves_maze}
\end{figure*}

\begin{figure*}[t]
\centering
\setlength{\tabcolsep}{2pt}
\renewcommand{\arraystretch}{0.85}

\begin{tabular}{c}
\scriptsize\texttt{scene-sparse} \\
\includegraphics[width=0.95\textwidth]{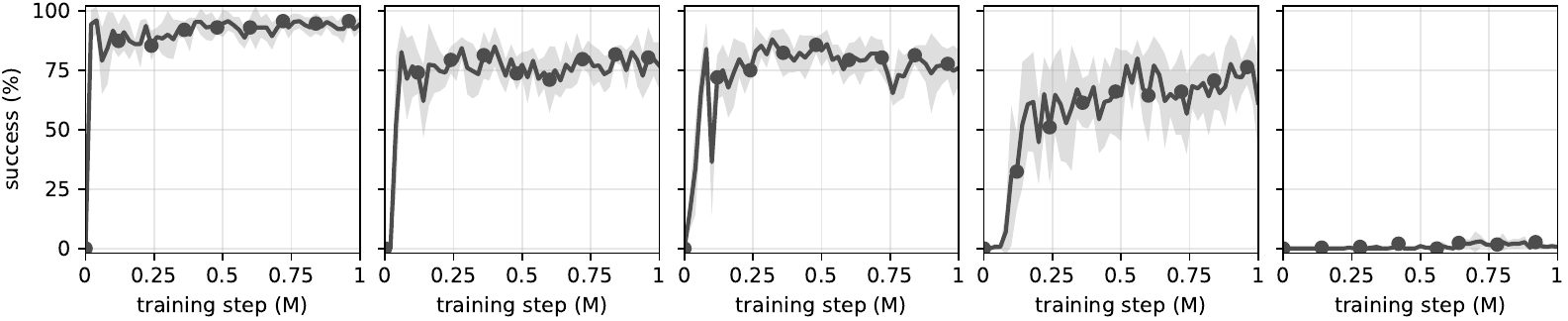} \\[6pt]
\scriptsize\texttt{puzzle-3x3-sparse} \\
\includegraphics[width=0.95\textwidth]{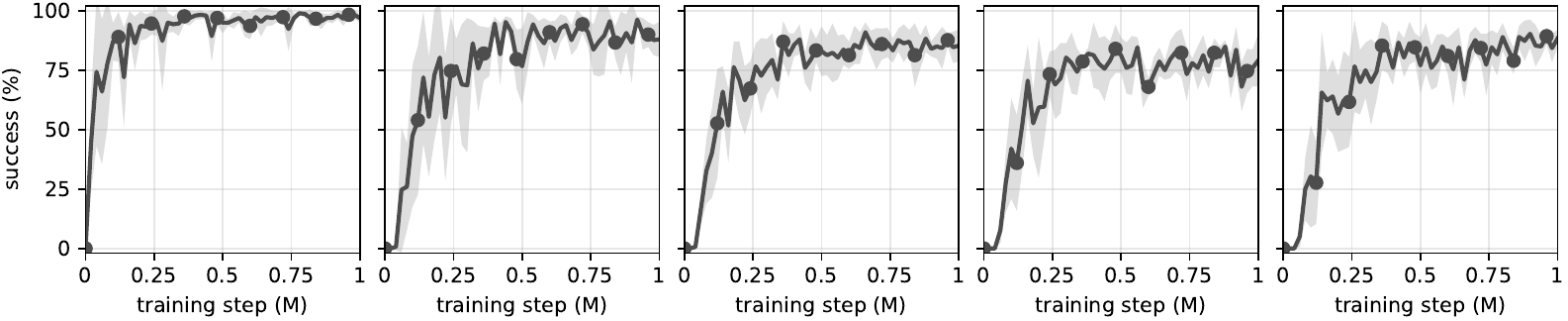} \\[6pt]
\scriptsize\texttt{cube-double} \\
\includegraphics[width=0.95\textwidth]{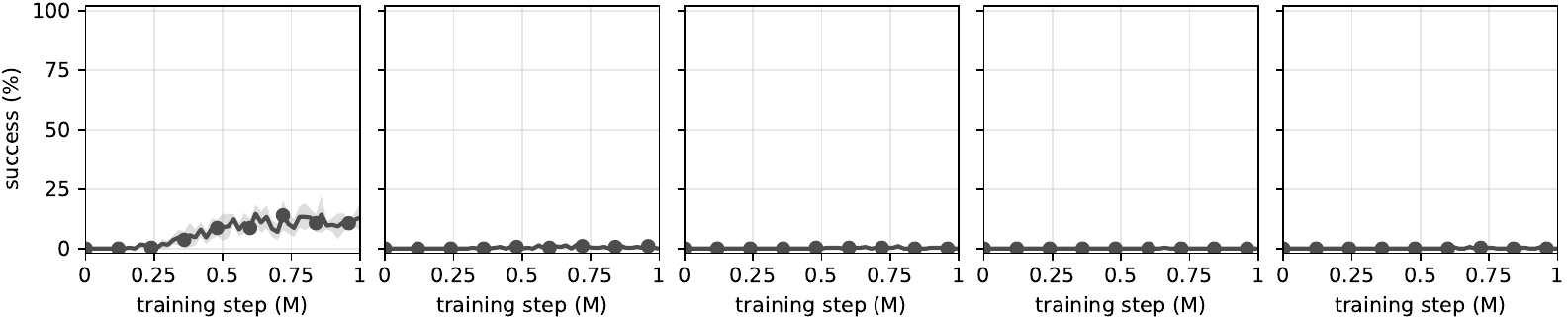} \\[6pt]
\scriptsize\texttt{cube-triple} \\
\includegraphics[width=0.95\textwidth]{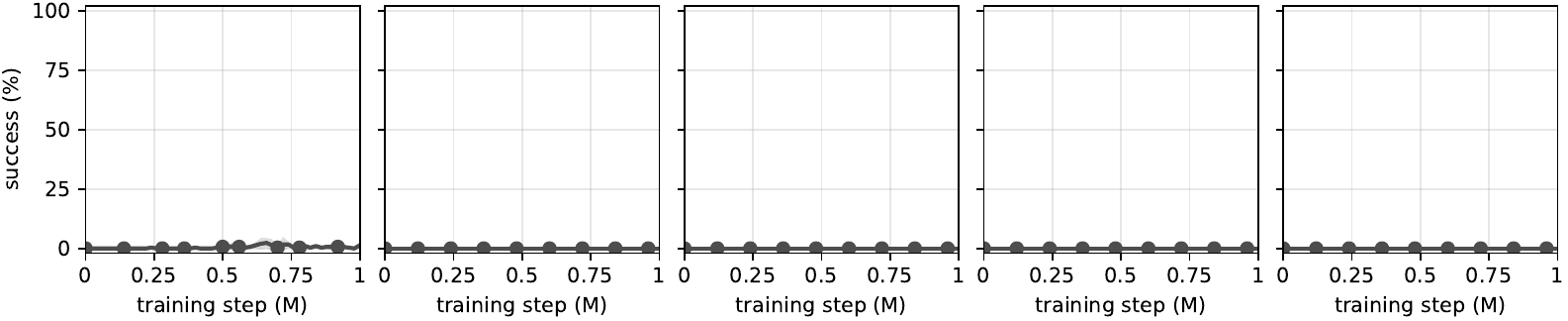} \\
\end{tabular}
\caption{VINE training curves (success rate vs.\ training steps), manipulation domains.
Each row is one domain; the five columns are task1--task5. Curves show the mean over seeds with a 95\% confidence band.}
\label{fig:vine_training_curves_manip}
\end{figure*}

\begin{table}[t]
    \centering
    \makebox[\textwidth]{\scalebox{0.58}{
    \begin{tabular}{clccccccccccccccc}
\toprule
 &  & \texttt{ReBRAC} & \texttt{FBRAC} & \texttt{BAM} & \texttt{FQL} & \texttt{FAWAC} & \texttt{CGQL} & \texttt{CGQL-M} & \texttt{CGQL-L} & \texttt{DAC} & \texttt{QSM} & \texttt{DSRL} & \texttt{FEdit} & \texttt{IFQL} & \texttt{QAM} & \texttt{VINE} \\
\midrule
 &  & $\overset{\scaleto{[97,99]}{3pt}}{\mathbf{98}}$ & $\overset{\scaleto{[0,0]}{3pt}}{\mathbf{0}}$ & $\overset{\scaleto{[88,93]}{3pt}}{\mathbf{90}}$ & $\overset{\scaleto{[89,96]}{3pt}}{\mathbf{93}}$ & $\overset{\scaleto{[3,9]}{3pt}}{\mathbf{6}}$ & $\overset{\scaleto{[49,75]}{3pt}}{\mathbf{63}}$ & $\overset{\scaleto{[49,62]}{3pt}}{\mathbf{56}}$ & $\overset{\scaleto{[31,47]}{3pt}}{\mathbf{39}}$ & $\overset{\scaleto{[82,92]}{3pt}}{\mathbf{88}}$ & $\overset{\scaleto{[81,95]}{3pt}}{\mathbf{89}}$ & $\overset{\scaleto{[53,70]}{3pt}}{\mathbf{62}}$ & $\overset{\scaleto{[60,73]}{3pt}}{\mathbf{67}}$ & $\overset{\scaleto{[26,54]}{3pt}}{\mathbf{41}}$ & $\overset{\scaleto{[67,85]}{3pt}}{\mathbf{77}}$ & $\overset{\scaleto{[100,100]}{3pt}}{\mathbf{100}}$ \\
 &  & $\overset{\scaleto{[85,91]}{3pt}}{\mathbf{88}}$ & $\overset{\scaleto{[0,0]}{3pt}}{\mathbf{0}}$ & $\overset{\scaleto{[54,65]}{3pt}}{\mathbf{60}}$ & $\overset{\scaleto{[83,89]}{3pt}}{\mathbf{86}}$ & $\overset{\scaleto{[0,2]}{3pt}}{\mathbf{1}}$ & $\overset{\scaleto{[64,79]}{3pt}}{\mathbf{73}}$ & $\overset{\scaleto{[45,54]}{3pt}}{\mathbf{49}}$ & $\overset{\scaleto{[47,55]}{3pt}}{\mathbf{51}}$ & $\overset{\scaleto{[67,75]}{3pt}}{\mathbf{71}}$ & $\overset{\scaleto{[80,88]}{3pt}}{\mathbf{84}}$ & $\overset{\scaleto{[68,82]}{3pt}}{\mathbf{75}}$ & $\overset{\scaleto{[64,70]}{3pt}}{\mathbf{67}}$ & $\overset{\scaleto{[10,19]}{3pt}}{\mathbf{14}}$ & $\overset{\scaleto{[76,83]}{3pt}}{\mathbf{80}}$ & $\overset{\scaleto{[97,99]}{3pt}}{\mathbf{98}}$ \\
 &  & $\overset{\scaleto{[97,99]}{3pt}}{\mathbf{98}}$ & $\overset{\scaleto{[5,20]}{3pt}}{\mathbf{12}}$ & $\overset{\scaleto{[92,96]}{3pt}}{\mathbf{94}}$ & $\overset{\scaleto{[48,68]}{3pt}}{\mathbf{59}}$ & $\overset{\scaleto{[32,48]}{3pt}}{\mathbf{40}}$ & $\overset{\scaleto{[89,94]}{3pt}}{\mathbf{91}}$ & $\overset{\scaleto{[87,93]}{3pt}}{\mathbf{90}}$ & $\overset{\scaleto{[75,82]}{3pt}}{\mathbf{79}}$ & $\overset{\scaleto{[97,99]}{3pt}}{\mathbf{98}}$ & $\overset{\scaleto{[96,99]}{3pt}}{\mathbf{98}}$ & $\overset{\scaleto{[77,85]}{3pt}}{\mathbf{81}}$ & $\overset{\scaleto{[53,77]}{3pt}}{\mathbf{65}}$ & $\overset{\scaleto{[44,63]}{3pt}}{\mathbf{54}}$ & $\overset{\scaleto{[86,93]}{3pt}}{\mathbf{89}}$ & $\overset{\scaleto{[100,100]}{3pt}}{\mathbf{100}}$ \\
 &  & $\overset{\scaleto{[92,95]}{3pt}}{\mathbf{94}}$ & $\overset{\scaleto{[0,0]}{3pt}}{\mathbf{0}}$ & $\overset{\scaleto{[83,88]}{3pt}}{\mathbf{85}}$ & $\overset{\scaleto{[41,68]}{3pt}}{\mathbf{54}}$ & $\overset{\scaleto{[15,20]}{3pt}}{\mathbf{18}}$ & $\overset{\scaleto{[74,83]}{3pt}}{\mathbf{78}}$ & $\overset{\scaleto{[75,81]}{3pt}}{\mathbf{78}}$ & $\overset{\scaleto{[73,80]}{3pt}}{\mathbf{76}}$ & $\overset{\scaleto{[89,94]}{3pt}}{\mathbf{91}}$ & $\overset{\scaleto{[87,93]}{3pt}}{\mathbf{90}}$ & $\overset{\scaleto{[5,33]}{3pt}}{\mathbf{18}}$ & $\overset{\scaleto{[20,36]}{3pt}}{\mathbf{28}}$ & $\overset{\scaleto{[20,29]}{3pt}}{\mathbf{25}}$ & $\overset{\scaleto{[52,81]}{3pt}}{\mathbf{69}}$ & $\overset{\scaleto{[96,100]}{3pt}}{\mathbf{98}}$ \\
 &  & $\overset{\scaleto{[94,96]}{3pt}}{\mathbf{95}}$ & $\overset{\scaleto{[0,0]}{3pt}}{\mathbf{0}}$ & $\overset{\scaleto{[86,91]}{3pt}}{\mathbf{88}}$ & $\overset{\scaleto{[83,88]}{3pt}}{\mathbf{86}}$ & $\overset{\scaleto{[17,29]}{3pt}}{\mathbf{22}}$ & $\overset{\scaleto{[69,81]}{3pt}}{\mathbf{75}}$ & $\overset{\scaleto{[76,84]}{3pt}}{\mathbf{80}}$ & $\overset{\scaleto{[75,81]}{3pt}}{\mathbf{78}}$ & $\overset{\scaleto{[90,94]}{3pt}}{\mathbf{92}}$ & $\overset{\scaleto{[91,95]}{3pt}}{\mathbf{93}}$ & $\overset{\scaleto{[65,73]}{3pt}}{\mathbf{69}}$ & $\overset{\scaleto{[56,69]}{3pt}}{\mathbf{63}}$ & $\overset{\scaleto{[34,54]}{3pt}}{\mathbf{45}}$ & $\overset{\scaleto{[86,91]}{3pt}}{\mathbf{89}}$ & $\overset{\scaleto{[99,100]}{3pt}}{\mathbf{99}}$ \\
\multirow{-6}{*}{\texttt{antmaze-large}} & \texttt{agg. (5 tasks)} & $\overset{\scaleto{[94,95]}{3pt}}{\mathbf{94}}$ & $\overset{\scaleto{[1,4]}{3pt}}{\mathbf{2}}$ & $\overset{\scaleto{[82,85]}{3pt}}{\mathbf{84}}$ & $\overset{\scaleto{[72,79]}{3pt}}{\mathbf{76}}$ & $\overset{\scaleto{[15,19]}{3pt}}{\mathbf{17}}$ & $\overset{\scaleto{[73,80]}{3pt}}{\mathbf{76}}$ & $\overset{\scaleto{[68,73]}{3pt}}{\mathbf{71}}$ & $\overset{\scaleto{[62,67]}{3pt}}{\mathbf{65}}$ & $\overset{\scaleto{[86,90]}{3pt}}{\mathbf{88}}$ & $\overset{\scaleto{[89,93]}{3pt}}{\mathbf{91}}$ & $\overset{\scaleto{[56,66]}{3pt}}{\mathbf{61}}$ & $\overset{\scaleto{[53,62]}{3pt}}{\mathbf{58}}$ & $\overset{\scaleto{[32,39]}{3pt}}{\mathbf{36}}$ & $\overset{\scaleto{[78,84]}{3pt}}{\mathbf{81}}$ & $\overset{\scaleto{[98,100]}{3pt}}{\mathbf{99}}$ \\
\midrule
 &  & $\overset{\scaleto{[31,41]}{3pt}}{\mathbf{36}}$ & $\overset{\scaleto{[0,0]}{3pt}}{\mathbf{0}}$ & $\overset{\scaleto{[0,9]}{3pt}}{\mathbf{3}}$ & $\overset{\scaleto{[0,0]}{3pt}}{\mathbf{0}}$ & $\overset{\scaleto{[0,0]}{3pt}}{\mathbf{0}}$ & $\overset{\scaleto{[0,8]}{3pt}}{\mathbf{2}}$ & $\overset{\scaleto{[0,0]}{3pt}}{\mathbf{0}}$ & $\overset{\scaleto{[0,0]}{3pt}}{\mathbf{0}}$ & $\overset{\scaleto{[45,61]}{3pt}}{\mathbf{53}}$ & $\overset{\scaleto{[57,82]}{3pt}}{\mathbf{72}}$ & $\overset{\scaleto{[0,0]}{3pt}}{\mathbf{0}}$ & $\overset{\scaleto{[0,0]}{3pt}}{\mathbf{0}}$ & $\overset{\scaleto{[0,0]}{3pt}}{\mathbf{0}}$ & $\overset{\scaleto{[5,9]}{3pt}}{\mathbf{7}}$ & $\overset{\scaleto{[44,71]}{3pt}}{\mathbf{60}}$ \\
 &  & $\overset{\scaleto{[68,78]}{3pt}}{\mathbf{73}}$ & $\overset{\scaleto{[0,0]}{3pt}}{\mathbf{0}}$ & $\overset{\scaleto{[0,0]}{3pt}}{\mathbf{0}}$ & $\overset{\scaleto{[0,0]}{3pt}}{\mathbf{0}}$ & $\overset{\scaleto{[0,0]}{3pt}}{\mathbf{0}}$ & $\overset{\scaleto{[0,0]}{3pt}}{\mathbf{0}}$ & $\overset{\scaleto{[0,0]}{3pt}}{\mathbf{0}}$ & $\overset{\scaleto{[2,6]}{3pt}}{\mathbf{4}}$ & $\overset{\scaleto{[0,0]}{3pt}}{\mathbf{0}}$ & $\overset{\scaleto{[0,0]}{3pt}}{\mathbf{0}}$ & $\overset{\scaleto{[0,0]}{3pt}}{\mathbf{0}}$ & $\overset{\scaleto{[0,0]}{3pt}}{\mathbf{0}}$ & $\overset{\scaleto{[0,0]}{3pt}}{\mathbf{0}}$ & $\overset{\scaleto{[0,0]}{3pt}}{\mathbf{0}}$ & $\overset{\scaleto{[83,92]}{3pt}}{\mathbf{87}}$ \\
 &  & $\overset{\scaleto{[9,18]}{3pt}}{\mathbf{13}}$ & $\overset{\scaleto{[0,0]}{3pt}}{\mathbf{0}}$ & $\overset{\scaleto{[0,0]}{3pt}}{\mathbf{0}}$ & $\overset{\scaleto{[0,0]}{3pt}}{\mathbf{0}}$ & $\overset{\scaleto{[0,0]}{3pt}}{\mathbf{0}}$ & $\overset{\scaleto{[0,0]}{3pt}}{\mathbf{0}}$ & $\overset{\scaleto{[0,0]}{3pt}}{\mathbf{0}}$ & $\overset{\scaleto{[0,0]}{3pt}}{\mathbf{0}}$ & $\overset{\scaleto{[0,0]}{3pt}}{\mathbf{0}}$ & $\overset{\scaleto{[0,0]}{3pt}}{\mathbf{0}}$ & $\overset{\scaleto{[0,0]}{3pt}}{\mathbf{0}}$ & $\overset{\scaleto{[0,0]}{3pt}}{\mathbf{0}}$ & $\overset{\scaleto{[0,2]}{3pt}}{\mathbf{1}}$ & $\overset{\scaleto{[0,1]}{3pt}}{\mathbf{0}}$ & $\overset{\scaleto{[40,62]}{3pt}}{\mathbf{51}}$ \\
 &  & $\overset{\scaleto{[59,84]}{3pt}}{\mathbf{74}}$ & $\overset{\scaleto{[0,0]}{3pt}}{\mathbf{0}}$ & $\overset{\scaleto{[0,0]}{3pt}}{\mathbf{0}}$ & $\overset{\scaleto{[0,0]}{3pt}}{\mathbf{0}}$ & $\overset{\scaleto{[0,0]}{3pt}}{\mathbf{0}}$ & $\overset{\scaleto{[0,0]}{3pt}}{\mathbf{0}}$ & $\overset{\scaleto{[0,0]}{3pt}}{\mathbf{0}}$ & $\overset{\scaleto{[0,0]}{3pt}}{\mathbf{0}}$ & $\overset{\scaleto{[0,0]}{3pt}}{\mathbf{0}}$ & $\overset{\scaleto{[0,0]}{3pt}}{\mathbf{0}}$ & $\overset{\scaleto{[6,19]}{3pt}}{\mathbf{12}}$ & $\overset{\scaleto{[3,17]}{3pt}}{\mathbf{10}}$ & $\overset{\scaleto{[1,4]}{3pt}}{\mathbf{2}}$ & $\overset{\scaleto{[25,44]}{3pt}}{\mathbf{34}}$ & $\overset{\scaleto{[81,91]}{3pt}}{\mathbf{86}}$ \\
 &  & $\overset{\scaleto{[85,92]}{3pt}}{\mathbf{89}}$ & $\overset{\scaleto{[0,0]}{3pt}}{\mathbf{0}}$ & $\overset{\scaleto{[0,3]}{3pt}}{\mathbf{1}}$ & $\overset{\scaleto{[0,0]}{3pt}}{\mathbf{0}}$ & $\overset{\scaleto{[0,0]}{3pt}}{\mathbf{0}}$ & $\overset{\scaleto{[0,0]}{3pt}}{\mathbf{0}}$ & $\overset{\scaleto{[6,40]}{3pt}}{\mathbf{22}}$ & $\overset{\scaleto{[0,26]}{3pt}}{\mathbf{11}}$ & $\overset{\scaleto{[6,47]}{3pt}}{\mathbf{25}}$ & $\overset{\scaleto{[0,4]}{3pt}}{\mathbf{1}}$ & $\overset{\scaleto{[1,3]}{3pt}}{\mathbf{2}}$ & $\overset{\scaleto{[0,0]}{3pt}}{\mathbf{0}}$ & $\overset{\scaleto{[0,5]}{3pt}}{\mathbf{2}}$ & $\overset{\scaleto{[29,68]}{3pt}}{\mathbf{49}}$ & $\overset{\scaleto{[92,97]}{3pt}}{\mathbf{95}}$ \\
\multirow{-6}{*}{\texttt{antmaze-giant}} & \texttt{agg. (5 tasks)} & $\overset{\scaleto{[53,60]}{3pt}}{\mathbf{57}}$ & $\overset{\scaleto{[0,0]}{3pt}}{\mathbf{0}}$ & $\overset{\scaleto{[0,2]}{3pt}}{\mathbf{1}}$ & $\overset{\scaleto{[0,0]}{3pt}}{\mathbf{0}}$ & $\overset{\scaleto{[0,0]}{3pt}}{\mathbf{0}}$ & $\overset{\scaleto{[0,2]}{3pt}}{\mathbf{0}}$ & $\overset{\scaleto{[1,8]}{3pt}}{\mathbf{4}}$ & $\overset{\scaleto{[1,6]}{3pt}}{\mathbf{3}}$ & $\overset{\scaleto{[11,21]}{3pt}}{\mathbf{16}}$ & $\overset{\scaleto{[11,17]}{3pt}}{\mathbf{15}}$ & $\overset{\scaleto{[1,4]}{3pt}}{\mathbf{3}}$ & $\overset{\scaleto{[1,3]}{3pt}}{\mathbf{2}}$ & $\overset{\scaleto{[0,2]}{3pt}}{\mathbf{1}}$ & $\overset{\scaleto{[14,22]}{3pt}}{\mathbf{18}}$ & $\overset{\scaleto{[68,83]}{3pt}}{\mathbf{76}}$ \\
\midrule
 &  & $\overset{\scaleto{[27,50]}{3pt}}{\mathbf{38}}$ & $\overset{\scaleto{[21,30]}{3pt}}{\mathbf{26}}$ & $\overset{\scaleto{[47,52]}{3pt}}{\mathbf{49}}$ & $\overset{\scaleto{[20,50]}{3pt}}{\mathbf{34}}$ & $\overset{\scaleto{[15,20]}{3pt}}{\mathbf{18}}$ & $\overset{\scaleto{[26,34]}{3pt}}{\mathbf{30}}$ & $\overset{\scaleto{[1,17]}{3pt}}{\mathbf{8}}$ & $\overset{\scaleto{[52,59]}{3pt}}{\mathbf{55}}$ & $\overset{\scaleto{[77,93]}{3pt}}{\mathbf{87}}$ & $\overset{\scaleto{[68,90]}{3pt}}{\mathbf{81}}$ & $\overset{\scaleto{[32,64]}{3pt}}{\mathbf{49}}$ & $\overset{\scaleto{[0,0]}{3pt}}{\mathbf{0}}$ & $\overset{\scaleto{[84,88]}{3pt}}{\mathbf{86}}$ & $\overset{\scaleto{[29,49]}{3pt}}{\mathbf{40}}$ & $\overset{\scaleto{[90,94]}{3pt}}{\mathbf{92}}$ \\
 &  & $\overset{\scaleto{[80,98]}{3pt}}{\mathbf{91}}$ & $\overset{\scaleto{[75,82]}{3pt}}{\mathbf{78}}$ & $\overset{\scaleto{[62,76]}{3pt}}{\mathbf{69}}$ & $\overset{\scaleto{[91,98]}{3pt}}{\mathbf{95}}$ & $\overset{\scaleto{[40,48]}{3pt}}{\mathbf{44}}$ & $\overset{\scaleto{[75,82]}{3pt}}{\mathbf{78}}$ & $\overset{\scaleto{[98,100]}{3pt}}{\mathbf{99}}$ & $\overset{\scaleto{[91,95]}{3pt}}{\mathbf{93}}$ & $\overset{\scaleto{[94,97]}{3pt}}{\mathbf{96}}$ & $\overset{\scaleto{[94,97]}{3pt}}{\mathbf{96}}$ & $\overset{\scaleto{[88,93]}{3pt}}{\mathbf{91}}$ & $\overset{\scaleto{[30,48]}{3pt}}{\mathbf{39}}$ & $\overset{\scaleto{[90,95]}{3pt}}{\mathbf{92}}$ & $\overset{\scaleto{[96,99]}{3pt}}{\mathbf{97}}$ & $\overset{\scaleto{[98,99]}{3pt}}{\mathbf{99}}$ \\
 &  & $\overset{\scaleto{[62,98]}{3pt}}{\mathbf{83}}$ & $\overset{\scaleto{[19,35]}{3pt}}{\mathbf{28}}$ & $\overset{\scaleto{[72,79]}{3pt}}{\mathbf{75}}$ & $\overset{\scaleto{[95,98]}{3pt}}{\mathbf{96}}$ & $\overset{\scaleto{[18,23]}{3pt}}{\mathbf{20}}$ & $\overset{\scaleto{[65,87]}{3pt}}{\mathbf{78}}$ & $\overset{\scaleto{[0,1]}{3pt}}{\mathbf{0}}$ & $\overset{\scaleto{[39,84]}{3pt}}{\mathbf{62}}$ & $\overset{\scaleto{[84,96]}{3pt}}{\mathbf{92}}$ & $\overset{\scaleto{[89,96]}{3pt}}{\mathbf{93}}$ & $\overset{\scaleto{[22,50]}{3pt}}{\mathbf{36}}$ & $\overset{\scaleto{[0,0]}{3pt}}{\mathbf{0}}$ & $\overset{\scaleto{[91,95]}{3pt}}{\mathbf{93}}$ & $\overset{\scaleto{[93,98]}{3pt}}{\mathbf{96}}$ & $\overset{\scaleto{[77,97]}{3pt}}{\mathbf{87}}$ \\
 &  & $\overset{\scaleto{[20,54]}{3pt}}{\mathbf{37}}$ & $\overset{\scaleto{[1,5]}{3pt}}{\mathbf{3}}$ & $\overset{\scaleto{[18,25]}{3pt}}{\mathbf{22}}$ & $\overset{\scaleto{[4,26]}{3pt}}{\mathbf{14}}$ & $\overset{\scaleto{[0,2]}{3pt}}{\mathbf{1}}$ & $\overset{\scaleto{[19,28]}{3pt}}{\mathbf{23}}$ & $\overset{\scaleto{[0,0]}{3pt}}{\mathbf{0}}$ & $\overset{\scaleto{[1,3]}{3pt}}{\mathbf{2}}$ & $\overset{\scaleto{[34,52]}{3pt}}{\mathbf{43}}$ & $\overset{\scaleto{[38,54]}{3pt}}{\mathbf{47}}$ & $\overset{\scaleto{[0,0]}{3pt}}{\mathbf{0}}$ & $\overset{\scaleto{[0,0]}{3pt}}{\mathbf{0}}$ & $\overset{\scaleto{[55,65]}{3pt}}{\mathbf{60}}$ & $\overset{\scaleto{[0,6]}{3pt}}{\mathbf{3}}$ & $\overset{\scaleto{[36,75]}{3pt}}{\mathbf{56}}$ \\
 &  & $\overset{\scaleto{[94,98]}{3pt}}{\mathbf{96}}$ & $\overset{\scaleto{[49,68]}{3pt}}{\mathbf{59}}$ & $\overset{\scaleto{[79,86]}{3pt}}{\mathbf{83}}$ & $\overset{\scaleto{[98,100]}{3pt}}{\mathbf{99}}$ & $\overset{\scaleto{[32,41]}{3pt}}{\mathbf{36}}$ & $\overset{\scaleto{[88,91]}{3pt}}{\mathbf{89}}$ & $\overset{\scaleto{[99,100]}{3pt}}{\mathbf{100}}$ & $\overset{\scaleto{[97,100]}{3pt}}{\mathbf{98}}$ & $\overset{\scaleto{[98,99]}{3pt}}{\mathbf{99}}$ & $\overset{\scaleto{[99,100]}{3pt}}{\mathbf{99}}$ & $\overset{\scaleto{[86,93]}{3pt}}{\mathbf{90}}$ & $\overset{\scaleto{[61,74]}{3pt}}{\mathbf{68}}$ & $\overset{\scaleto{[96,99]}{3pt}}{\mathbf{98}}$ & $\overset{\scaleto{[98,100]}{3pt}}{\mathbf{99}}$ & $\overset{\scaleto{[98,100]}{3pt}}{\mathbf{99}}$ \\
\multirow{-6}{*}{\texttt{humanoidmaze-medium}} & \texttt{agg. (5 tasks)} & $\overset{\scaleto{[65,74]}{3pt}}{\mathbf{69}}$ & $\overset{\scaleto{[37,41]}{3pt}}{\mathbf{39}}$ & $\overset{\scaleto{[58,62]}{3pt}}{\mathbf{60}}$ & $\overset{\scaleto{[63,73]}{3pt}}{\mathbf{68}}$ & $\overset{\scaleto{[22,26]}{3pt}}{\mathbf{24}}$ & $\overset{\scaleto{[57,62]}{3pt}}{\mathbf{60}}$ & $\overset{\scaleto{[40,43]}{3pt}}{\mathbf{42}}$ & $\overset{\scaleto{[57,67]}{3pt}}{\mathbf{62}}$ & $\overset{\scaleto{[81,85]}{3pt}}{\mathbf{83}}$ & $\overset{\scaleto{[80,86]}{3pt}}{\mathbf{83}}$ & $\overset{\scaleto{[48,57]}{3pt}}{\mathbf{53}}$ & $\overset{\scaleto{[20,23]}{3pt}}{\mathbf{22}}$ & $\overset{\scaleto{[85,87]}{3pt}}{\mathbf{86}}$ & $\overset{\scaleto{[64,69]}{3pt}}{\mathbf{67}}$ & $\overset{\scaleto{[79,93]}{3pt}}{\mathbf{87}}$ \\
\midrule
 &  & $\overset{\scaleto{[28,43]}{3pt}}{\mathbf{36}}$ & $\overset{\scaleto{[0,0]}{3pt}}{\mathbf{0}}$ & $\overset{\scaleto{[1,9]}{3pt}}{\mathbf{5}}$ & $\overset{\scaleto{[5,11]}{3pt}}{\mathbf{8}}$ & $\overset{\scaleto{[0,0]}{3pt}}{\mathbf{0}}$ & $\overset{\scaleto{[1,4]}{3pt}}{\mathbf{2}}$ & $\overset{\scaleto{[0,1]}{3pt}}{\mathbf{0}}$ & $\overset{\scaleto{[1,4]}{3pt}}{\mathbf{2}}$ & $\overset{\scaleto{[0,1]}{3pt}}{\mathbf{0}}$ & $\overset{\scaleto{[7,14]}{3pt}}{\mathbf{10}}$ & $\overset{\scaleto{[8,20]}{3pt}}{\mathbf{14}}$ & $\overset{\scaleto{[6,9]}{3pt}}{\mathbf{8}}$ & $\overset{\scaleto{[31,41]}{3pt}}{\mathbf{36}}$ & $\overset{\scaleto{[3,9]}{3pt}}{\mathbf{6}}$ & $\overset{\scaleto{[63,78]}{3pt}}{\mathbf{71}}$ \\
 &  & $\overset{\scaleto{[0,2]}{3pt}}{\mathbf{1}}$ & $\overset{\scaleto{[0,0]}{3pt}}{\mathbf{0}}$ & $\overset{\scaleto{[0,0]}{3pt}}{\mathbf{0}}$ & $\overset{\scaleto{[0,0]}{3pt}}{\mathbf{0}}$ & $\overset{\scaleto{[0,0]}{3pt}}{\mathbf{0}}$ & $\overset{\scaleto{[0,0]}{3pt}}{\mathbf{0}}$ & $\overset{\scaleto{[0,0]}{3pt}}{\mathbf{0}}$ & $\overset{\scaleto{[0,0]}{3pt}}{\mathbf{0}}$ & $\overset{\scaleto{[0,0]}{3pt}}{\mathbf{0}}$ & $\overset{\scaleto{[0,0]}{3pt}}{\mathbf{0}}$ & $\overset{\scaleto{[0,0]}{3pt}}{\mathbf{0}}$ & $\overset{\scaleto{[0,0]}{3pt}}{\mathbf{0}}$ & $\overset{\scaleto{[0,0]}{3pt}}{\mathbf{0}}$ & $\overset{\scaleto{[0,0]}{3pt}}{\mathbf{0}}$ & $\overset{\scaleto{[12,20]}{3pt}}{\mathbf{16}}$ \\
 &  & $\overset{\scaleto{[23,42]}{3pt}}{\mathbf{32}}$ & $\overset{\scaleto{[0,1]}{3pt}}{\mathbf{0}}$ & $\overset{\scaleto{[3,9]}{3pt}}{\mathbf{6}}$ & $\overset{\scaleto{[15,23]}{3pt}}{\mathbf{19}}$ & $\overset{\scaleto{[0,1]}{3pt}}{\mathbf{1}}$ & $\overset{\scaleto{[9,12]}{3pt}}{\mathbf{11}}$ & $\overset{\scaleto{[2,6]}{3pt}}{\mathbf{4}}$ & $\overset{\scaleto{[14,21]}{3pt}}{\mathbf{18}}$ & $\overset{\scaleto{[0,0]}{3pt}}{\mathbf{0}}$ & $\overset{\scaleto{[19,29]}{3pt}}{\mathbf{24}}$ & $\overset{\scaleto{[0,2]}{3pt}}{\mathbf{1}}$ & $\overset{\scaleto{[2,5]}{3pt}}{\mathbf{3}}$ & $\overset{\scaleto{[50,60]}{3pt}}{\mathbf{55}}$ & $\overset{\scaleto{[9,24]}{3pt}}{\mathbf{16}}$ & $\overset{\scaleto{[69,85]}{3pt}}{\mathbf{77}}$ \\
 &  & $\overset{\scaleto{[4,16]}{3pt}}{\mathbf{10}}$ & $\overset{\scaleto{[0,0]}{3pt}}{\mathbf{0}}$ & $\overset{\scaleto{[2,10]}{3pt}}{\mathbf{6}}$ & $\overset{\scaleto{[6,13]}{3pt}}{\mathbf{9}}$ & $\overset{\scaleto{[0,0]}{3pt}}{\mathbf{0}}$ & $\overset{\scaleto{[4,6]}{3pt}}{\mathbf{5}}$ & $\overset{\scaleto{[5,12]}{3pt}}{\mathbf{8}}$ & $\overset{\scaleto{[2,6]}{3pt}}{\mathbf{4}}$ & $\overset{\scaleto{[0,0]}{3pt}}{\mathbf{0}}$ & $\overset{\scaleto{[5,13]}{3pt}}{\mathbf{9}}$ & $\overset{\scaleto{[0,0]}{3pt}}{\mathbf{0}}$ & $\overset{\scaleto{[0,2]}{3pt}}{\mathbf{1}}$ & $\overset{\scaleto{[1,3]}{3pt}}{\mathbf{2}}$ & $\overset{\scaleto{[12,20]}{3pt}}{\mathbf{16}}$ & $\overset{\scaleto{[25,60]}{3pt}}{\mathbf{43}}$ \\
 &  & $\overset{\scaleto{[3,12]}{3pt}}{\mathbf{7}}$ & $\overset{\scaleto{[0,0]}{3pt}}{\mathbf{0}}$ & $\overset{\scaleto{[5,17]}{3pt}}{\mathbf{11}}$ & $\overset{\scaleto{[5,13]}{3pt}}{\mathbf{9}}$ & $\overset{\scaleto{[0,0]}{3pt}}{\mathbf{0}}$ & $\overset{\scaleto{[3,6]}{3pt}}{\mathbf{5}}$ & $\overset{\scaleto{[6,27]}{3pt}}{\mathbf{16}}$ & $\overset{\scaleto{[6,11]}{3pt}}{\mathbf{9}}$ & $\overset{\scaleto{[0,2]}{3pt}}{\mathbf{1}}$ & $\overset{\scaleto{[5,10]}{3pt}}{\mathbf{7}}$ & $\overset{\scaleto{[0,4]}{3pt}}{\mathbf{2}}$ & $\overset{\scaleto{[0,3]}{3pt}}{\mathbf{1}}$ & $\overset{\scaleto{[15,42]}{3pt}}{\mathbf{29}}$ & $\overset{\scaleto{[11,26]}{3pt}}{\mathbf{19}}$ & $\overset{\scaleto{[7,43]}{3pt}}{\mathbf{25}}$ \\
\multirow{-6}{*}{\texttt{humanoidmaze-large}} & \texttt{agg. (5 tasks)} & $\overset{\scaleto{[15,20]}{3pt}}{\mathbf{17}}$ & $\overset{\scaleto{[0,0]}{3pt}}{\mathbf{0}}$ & $\overset{\scaleto{[4,8]}{3pt}}{\mathbf{5}}$ & $\overset{\scaleto{[7,11]}{3pt}}{\mathbf{9}}$ & $\overset{\scaleto{[0,0]}{3pt}}{\mathbf{0}}$ & $\overset{\scaleto{[4,5]}{3pt}}{\mathbf{5}}$ & $\overset{\scaleto{[3,8]}{3pt}}{\mathbf{6}}$ & $\overset{\scaleto{[5,8]}{3pt}}{\mathbf{6}}$ & $\overset{\scaleto{[0,0]}{3pt}}{\mathbf{0}}$ & $\overset{\scaleto{[9,11]}{3pt}}{\mathbf{10}}$ & $\overset{\scaleto{[2,5]}{3pt}}{\mathbf{3}}$ & $\overset{\scaleto{[2,3]}{3pt}}{\mathbf{3}}$ & $\overset{\scaleto{[21,27]}{3pt}}{\mathbf{24}}$ & $\overset{\scaleto{[9,14]}{3pt}}{\mathbf{11}}$ & $\overset{\scaleto{[36,56]}{3pt}}{\mathbf{46}}$ \\
\midrule
 &  & $\overset{\scaleto{[96,99]}{3pt}}{\mathbf{98}}$ & $\overset{\scaleto{[55,77]}{3pt}}{\mathbf{66}}$ & $\overset{\scaleto{[100,100]}{3pt}}{\mathbf{100}}$ & $\overset{\scaleto{[99,100]}{3pt}}{\mathbf{99}}$ & $\overset{\scaleto{[57,67]}{3pt}}{\mathbf{62}}$ & $\overset{\scaleto{[73,85]}{3pt}}{\mathbf{79}}$ & $\overset{\scaleto{[100,100]}{3pt}}{\mathbf{100}}$ & $\overset{\scaleto{[100,100]}{3pt}}{\mathbf{100}}$ & $\overset{\scaleto{[99,100]}{3pt}}{\mathbf{100}}$ & $\overset{\scaleto{[100,100]}{3pt}}{\mathbf{100}}$ & $\overset{\scaleto{[100,100]}{3pt}}{\mathbf{100}}$ & $\overset{\scaleto{[92,97]}{3pt}}{\mathbf{95}}$ & $\overset{\scaleto{[91,95]}{3pt}}{\mathbf{93}}$ & $\overset{\scaleto{[100,100]}{3pt}}{\mathbf{100}}$ & $\overset{\scaleto{[97,99]}{3pt}}{\mathbf{98}}$ \\
 &  & $\overset{\scaleto{[87,93]}{3pt}}{\mathbf{90}}$ & $\overset{\scaleto{[73,87]}{3pt}}{\mathbf{80}}$ & $\overset{\scaleto{[98,100]}{3pt}}{\mathbf{99}}$ & $\overset{\scaleto{[65,76]}{3pt}}{\mathbf{71}}$ & $\overset{\scaleto{[11,18]}{3pt}}{\mathbf{15}}$ & $\overset{\scaleto{[79,94]}{3pt}}{\mathbf{88}}$ & $\overset{\scaleto{[98,100]}{3pt}}{\mathbf{99}}$ & $\overset{\scaleto{[97,100]}{3pt}}{\mathbf{98}}$ & $\overset{\scaleto{[98,100]}{3pt}}{\mathbf{99}}$ & $\overset{\scaleto{[76,87]}{3pt}}{\mathbf{82}}$ & $\overset{\scaleto{[100,100]}{3pt}}{\mathbf{100}}$ & $\overset{\scaleto{[95,98]}{3pt}}{\mathbf{97}}$ & $\overset{\scaleto{[58,69]}{3pt}}{\mathbf{63}}$ & $\overset{\scaleto{[97,100]}{3pt}}{\mathbf{98}}$ & $\overset{\scaleto{[86,94]}{3pt}}{\mathbf{90}}$ \\
 &  & $\overset{\scaleto{[42,59]}{3pt}}{\mathbf{51}}$ & $\overset{\scaleto{[21,60]}{3pt}}{\mathbf{41}}$ & $\overset{\scaleto{[97,100]}{3pt}}{\mathbf{99}}$ & $\overset{\scaleto{[96,98]}{3pt}}{\mathbf{97}}$ & $\overset{\scaleto{[12,17]}{3pt}}{\mathbf{14}}$ & $\overset{\scaleto{[20,27]}{3pt}}{\mathbf{23}}$ & $\overset{\scaleto{[90,95]}{3pt}}{\mathbf{92}}$ & $\overset{\scaleto{[88,94]}{3pt}}{\mathbf{91}}$ & $\overset{\scaleto{[74,84]}{3pt}}{\mathbf{79}}$ & $\overset{\scaleto{[95,98]}{3pt}}{\mathbf{97}}$ & $\overset{\scaleto{[95,99]}{3pt}}{\mathbf{97}}$ & $\overset{\scaleto{[53,68]}{3pt}}{\mathbf{61}}$ & $\overset{\scaleto{[66,75]}{3pt}}{\mathbf{71}}$ & $\overset{\scaleto{[99,100]}{3pt}}{\mathbf{100}}$ & $\overset{\scaleto{[84,88]}{3pt}}{\mathbf{86}}$ \\
 &  & $\overset{\scaleto{[48,72]}{3pt}}{\mathbf{60}}$ & $\overset{\scaleto{[26,72]}{3pt}}{\mathbf{49}}$ & $\overset{\scaleto{[94,98]}{3pt}}{\mathbf{96}}$ & $\overset{\scaleto{[90,94]}{3pt}}{\mathbf{92}}$ & $\overset{\scaleto{[66,77]}{3pt}}{\mathbf{71}}$ & $\overset{\scaleto{[0,1]}{3pt}}{\mathbf{0}}$ & $\overset{\scaleto{[1,11]}{3pt}}{\mathbf{6}}$ & $\overset{\scaleto{[71,96]}{3pt}}{\mathbf{86}}$ & $\overset{\scaleto{[2,15]}{3pt}}{\mathbf{8}}$ & $\overset{\scaleto{[98,100]}{3pt}}{\mathbf{99}}$ & $\overset{\scaleto{[100,100]}{3pt}}{\mathbf{100}}$ & $\overset{\scaleto{[28,42]}{3pt}}{\mathbf{35}}$ & $\overset{\scaleto{[97,99]}{3pt}}{\mathbf{98}}$ & $\overset{\scaleto{[99,100]}{3pt}}{\mathbf{100}}$ & $\overset{\scaleto{[81,92]}{3pt}}{\mathbf{87}}$ \\
 &  & $\overset{\scaleto{[11,44]}{3pt}}{\mathbf{27}}$ & $\overset{\scaleto{[7,31]}{3pt}}{\mathbf{17}}$ & $\overset{\scaleto{[95,97]}{3pt}}{\mathbf{96}}$ & $\overset{\scaleto{[29,37]}{3pt}}{\mathbf{33}}$ & $\overset{\scaleto{[21,33]}{3pt}}{\mathbf{27}}$ & $\overset{\scaleto{[0,1]}{3pt}}{\mathbf{0}}$ & $\overset{\scaleto{[61,84]}{3pt}}{\mathbf{74}}$ & $\overset{\scaleto{[63,70]}{3pt}}{\mathbf{66}}$ & $\overset{\scaleto{[41,62]}{3pt}}{\mathbf{53}}$ & $\overset{\scaleto{[44,56]}{3pt}}{\mathbf{50}}$ & $\overset{\scaleto{[100,100]}{3pt}}{\mathbf{100}}$ & $\overset{\scaleto{[18,30]}{3pt}}{\mathbf{24}}$ & $\overset{\scaleto{[93,97]}{3pt}}{\mathbf{95}}$ & $\overset{\scaleto{[84,90]}{3pt}}{\mathbf{87}}$ & $\overset{\scaleto{[2,8]}{3pt}}{\mathbf{5}}$ \\
\multirow{-6}{*}{\texttt{scene-sparse}} & \texttt{agg. (5 tasks)} & $\overset{\scaleto{[61,69]}{3pt}}{\mathbf{65}}$ & $\overset{\scaleto{[43,57]}{3pt}}{\mathbf{50}}$ & $\overset{\scaleto{[97,99]}{3pt}}{\mathbf{98}}$ & $\overset{\scaleto{[77,80]}{3pt}}{\mathbf{78}}$ & $\overset{\scaleto{[35,41]}{3pt}}{\mathbf{38}}$ & $\overset{\scaleto{[36,40]}{3pt}}{\mathbf{38}}$ & $\overset{\scaleto{[72,76]}{3pt}}{\mathbf{74}}$ & $\overset{\scaleto{[85,91]}{3pt}}{\mathbf{88}}$ & $\overset{\scaleto{[65,70]}{3pt}}{\mathbf{68}}$ & $\overset{\scaleto{[84,87]}{3pt}}{\mathbf{86}}$ & $\overset{\scaleto{[99,100]}{3pt}}{\mathbf{99}}$ & $\overset{\scaleto{[60,65]}{3pt}}{\mathbf{62}}$ & $\overset{\scaleto{[83,85]}{3pt}}{\mathbf{84}}$ & $\overset{\scaleto{[96,98]}{3pt}}{\mathbf{97}}$ & $\overset{\scaleto{[60,85]}{3pt}}{\mathbf{73}}$ \\
\midrule
 &  & $\overset{\scaleto{[98,100]}{3pt}}{\mathbf{99}}$ & $\overset{\scaleto{[0,2]}{3pt}}{\mathbf{1}}$ & $\overset{\scaleto{[9,48]}{3pt}}{\mathbf{26}}$ & $\overset{\scaleto{[98,100]}{3pt}}{\mathbf{99}}$ & $\overset{\scaleto{[6,9]}{3pt}}{\mathbf{8}}$ & $\overset{\scaleto{[78,94]}{3pt}}{\mathbf{87}}$ & $\overset{\scaleto{[100,100]}{3pt}}{\mathbf{100}}$ & $\overset{\scaleto{[96,99]}{3pt}}{\mathbf{98}}$ & $\overset{\scaleto{[96,100]}{3pt}}{\mathbf{98}}$ & $\overset{\scaleto{[89,98]}{3pt}}{\mathbf{94}}$ & $\overset{\scaleto{[58,100]}{3pt}}{\mathbf{83}}$ & $\overset{\scaleto{[100,100]}{3pt}}{\mathbf{100}}$ & $\overset{\scaleto{[100,100]}{3pt}}{\mathbf{100}}$ & $\overset{\scaleto{[95,100]}{3pt}}{\mathbf{98}}$ & $\overset{\scaleto{[100,100]}{3pt}}{\mathbf{100}}$ \\
 &  & $\overset{\scaleto{[61,92]}{3pt}}{\mathbf{77}}$ & $\overset{\scaleto{[0,0]}{3pt}}{\mathbf{0}}$ & $\overset{\scaleto{[51,100]}{3pt}}{\mathbf{75}}$ & $\overset{\scaleto{[59,96]}{3pt}}{\mathbf{79}}$ & $\overset{\scaleto{[1,4]}{3pt}}{\mathbf{2}}$ & $\overset{\scaleto{[38,70]}{3pt}}{\mathbf{55}}$ & $\overset{\scaleto{[100,100]}{3pt}}{\mathbf{100}}$ & $\overset{\scaleto{[83,98]}{3pt}}{\mathbf{92}}$ & $\overset{\scaleto{[42,89]}{3pt}}{\mathbf{66}}$ & $\overset{\scaleto{[46,86]}{3pt}}{\mathbf{67}}$ & $\overset{\scaleto{[58,100]}{3pt}}{\mathbf{83}}$ & $\overset{\scaleto{[100,100]}{3pt}}{\mathbf{100}}$ & $\overset{\scaleto{[100,100]}{3pt}}{\mathbf{100}}$ & $\overset{\scaleto{[100,100]}{3pt}}{\mathbf{100}}$ & $\overset{\scaleto{[97,100]}{3pt}}{\mathbf{99}}$ \\
 &  & $\overset{\scaleto{[81,89]}{3pt}}{\mathbf{85}}$ & $\overset{\scaleto{[0,0]}{3pt}}{\mathbf{0}}$ & $\overset{\scaleto{[54,98]}{3pt}}{\mathbf{78}}$ & $\overset{\scaleto{[58,100]}{3pt}}{\mathbf{83}}$ & $\overset{\scaleto{[0,2]}{3pt}}{\mathbf{1}}$ & $\overset{\scaleto{[16,33]}{3pt}}{\mathbf{24}}$ & $\overset{\scaleto{[100,100]}{3pt}}{\mathbf{100}}$ & $\overset{\scaleto{[79,94]}{3pt}}{\mathbf{87}}$ & $\overset{\scaleto{[35,71]}{3pt}}{\mathbf{54}}$ & $\overset{\scaleto{[2,9]}{3pt}}{\mathbf{5}}$ & $\overset{\scaleto{[58,100]}{3pt}}{\mathbf{83}}$ & $\overset{\scaleto{[97,99]}{3pt}}{\mathbf{98}}$ & $\overset{\scaleto{[100,100]}{3pt}}{\mathbf{100}}$ & $\overset{\scaleto{[100,100]}{3pt}}{\mathbf{100}}$ & $\overset{\scaleto{[90,95]}{3pt}}{\mathbf{93}}$ \\
 &  & $\overset{\scaleto{[46,76]}{3pt}}{\mathbf{62}}$ & $\overset{\scaleto{[0,0]}{3pt}}{\mathbf{0}}$ & $\overset{\scaleto{[80,100]}{3pt}}{\mathbf{92}}$ & $\overset{\scaleto{[61,100]}{3pt}}{\mathbf{84}}$ & $\overset{\scaleto{[0,2]}{3pt}}{\mathbf{1}}$ & $\overset{\scaleto{[19,30]}{3pt}}{\mathbf{25}}$ & $\overset{\scaleto{[100,100]}{3pt}}{\mathbf{100}}$ & $\overset{\scaleto{[74,94]}{3pt}}{\mathbf{85}}$ & $\overset{\scaleto{[66,78]}{3pt}}{\mathbf{72}}$ & $\overset{\scaleto{[40,69]}{3pt}}{\mathbf{54}}$ & $\overset{\scaleto{[58,100]}{3pt}}{\mathbf{83}}$ & $\overset{\scaleto{[94,99]}{3pt}}{\mathbf{97}}$ & $\overset{\scaleto{[100,100]}{3pt}}{\mathbf{100}}$ & $\overset{\scaleto{[100,100]}{3pt}}{\mathbf{100}}$ & $\overset{\scaleto{[86,92]}{3pt}}{\mathbf{89}}$ \\
 &  & $\overset{\scaleto{[51,88]}{3pt}}{\mathbf{70}}$ & $\overset{\scaleto{[0,2]}{3pt}}{\mathbf{1}}$ & $\overset{\scaleto{[1,26]}{3pt}}{\mathbf{9}}$ & $\overset{\scaleto{[2,9]}{3pt}}{\mathbf{6}}$ & $\overset{\scaleto{[0,3]}{3pt}}{\mathbf{1}}$ & $\overset{\scaleto{[37,58]}{3pt}}{\mathbf{47}}$ & $\overset{\scaleto{[100,100]}{3pt}}{\mathbf{100}}$ & $\overset{\scaleto{[80,96]}{3pt}}{\mathbf{88}}$ & $\overset{\scaleto{[29,73]}{3pt}}{\mathbf{51}}$ & $\overset{\scaleto{[32,57]}{3pt}}{\mathbf{46}}$ & $\overset{\scaleto{[100,100]}{3pt}}{\mathbf{100}}$ & $\overset{\scaleto{[99,100]}{3pt}}{\mathbf{100}}$ & $\overset{\scaleto{[100,100]}{3pt}}{\mathbf{100}}$ & $\overset{\scaleto{[99,100]}{3pt}}{\mathbf{100}}$ & $\overset{\scaleto{[93,98]}{3pt}}{\mathbf{96}}$ \\
\multirow{-6}{*}{\texttt{puzzle-3x3-sparse}} & \texttt{agg. (5 tasks)} & $\overset{\scaleto{[73,84]}{3pt}}{\mathbf{79}}$ & $\overset{\scaleto{[0,1]}{3pt}}{\mathbf{0}}$ & $\overset{\scaleto{[48,64]}{3pt}}{\mathbf{56}}$ & $\overset{\scaleto{[60,78]}{3pt}}{\mathbf{70}}$ & $\overset{\scaleto{[2,3]}{3pt}}{\mathbf{3}}$ & $\overset{\scaleto{[39,55]}{3pt}}{\mathbf{48}}$ & $\overset{\scaleto{[100,100]}{3pt}}{\mathbf{100}}$ & $\overset{\scaleto{[83,96]}{3pt}}{\mathbf{90}}$ & $\overset{\scaleto{[62,75]}{3pt}}{\mathbf{68}}$ & $\overset{\scaleto{[49,57]}{3pt}}{\mathbf{53}}$ & $\overset{\scaleto{[82,92]}{3pt}}{\mathbf{87}}$ & $\overset{\scaleto{[98,100]}{3pt}}{\mathbf{99}}$ & $\overset{\scaleto{[100,100]}{3pt}}{\mathbf{100}}$ & $\overset{\scaleto{[99,100]}{3pt}}{\mathbf{100}}$ & $\overset{\scaleto{[93,97]}{3pt}}{\mathbf{95}}$ \\
\midrule
 &  & $\overset{\scaleto{[24,35]}{3pt}}{\mathbf{29}}$ & $\overset{\scaleto{[0,0]}{3pt}}{\mathbf{0}}$ & $\overset{\scaleto{[80,88]}{3pt}}{\mathbf{84}}$ & $\overset{\scaleto{[77,86]}{3pt}}{\mathbf{81}}$ & $\overset{\scaleto{[6,10]}{3pt}}{\mathbf{8}}$ & $\overset{\scaleto{[51,60]}{3pt}}{\mathbf{55}}$ & $\overset{\scaleto{[45,55]}{3pt}}{\mathbf{50}}$ & $\overset{\scaleto{[58,67]}{3pt}}{\mathbf{62}}$ & $\overset{\scaleto{[32,40]}{3pt}}{\mathbf{36}}$ & $\overset{\scaleto{[60,73]}{3pt}}{\mathbf{67}}$ & $\overset{\scaleto{[88,92]}{3pt}}{\mathbf{90}}$ & $\overset{\scaleto{[73,81]}{3pt}}{\mathbf{77}}$ & $\overset{\scaleto{[14,18]}{3pt}}{\mathbf{16}}$ & $\overset{\scaleto{[80,89]}{3pt}}{\mathbf{85}}$ & $\overset{\scaleto{[11,20]}{3pt}}{\mathbf{16}}$ \\
 &  & $\overset{\scaleto{[3,10]}{3pt}}{\mathbf{6}}$ & $\overset{\scaleto{[0,0]}{3pt}}{\mathbf{0}}$ & $\overset{\scaleto{[40,56]}{3pt}}{\mathbf{49}}$ & $\overset{\scaleto{[40,52]}{3pt}}{\mathbf{46}}$ & $\overset{\scaleto{[0,1]}{3pt}}{\mathbf{0}}$ & $\overset{\scaleto{[32,46]}{3pt}}{\mathbf{39}}$ & $\overset{\scaleto{[41,50]}{3pt}}{\mathbf{46}}$ & $\overset{\scaleto{[46,58]}{3pt}}{\mathbf{52}}$ & $\overset{\scaleto{[31,40]}{3pt}}{\mathbf{35}}$ & $\overset{\scaleto{[59,70]}{3pt}}{\mathbf{65}}$ & $\overset{\scaleto{[82,91]}{3pt}}{\mathbf{87}}$ & $\overset{\scaleto{[21,34]}{3pt}}{\mathbf{28}}$ & $\overset{\scaleto{[10,14]}{3pt}}{\mathbf{12}}$ & $\overset{\scaleto{[73,86]}{3pt}}{\mathbf{79}}$ & $\overset{\scaleto{[2,2]}{3pt}}{\mathbf{2}}$ \\
 &  & $\overset{\scaleto{[1,4]}{3pt}}{\mathbf{2}}$ & $\overset{\scaleto{[0,0]}{3pt}}{\mathbf{0}}$ & $\overset{\scaleto{[31,45]}{3pt}}{\mathbf{38}}$ & $\overset{\scaleto{[35,49]}{3pt}}{\mathbf{42}}$ & $\overset{\scaleto{[0,0]}{3pt}}{\mathbf{0}}$ & $\overset{\scaleto{[36,50]}{3pt}}{\mathbf{44}}$ & $\overset{\scaleto{[46,55]}{3pt}}{\mathbf{50}}$ & $\overset{\scaleto{[49,57]}{3pt}}{\mathbf{52}}$ & $\overset{\scaleto{[26,36]}{3pt}}{\mathbf{31}}$ & $\overset{\scaleto{[50,63]}{3pt}}{\mathbf{57}}$ & $\overset{\scaleto{[82,89]}{3pt}}{\mathbf{85}}$ & $\overset{\scaleto{[36,51]}{3pt}}{\mathbf{44}}$ & $\overset{\scaleto{[8,12]}{3pt}}{\mathbf{10}}$ & $\overset{\scaleto{[47,61]}{3pt}}{\mathbf{54}}$ & $\overset{\scaleto{[0,2]}{3pt}}{\mathbf{1}}$ \\
 &  & $\overset{\scaleto{[0,2]}{3pt}}{\mathbf{1}}$ & $\overset{\scaleto{[0,0]}{3pt}}{\mathbf{0}}$ & $\overset{\scaleto{[5,12]}{3pt}}{\mathbf{8}}$ & $\overset{\scaleto{[8,12]}{3pt}}{\mathbf{10}}$ & $\overset{\scaleto{[0,0]}{3pt}}{\mathbf{0}}$ & $\overset{\scaleto{[10,16]}{3pt}}{\mathbf{13}}$ & $\overset{\scaleto{[8,14]}{3pt}}{\mathbf{11}}$ & $\overset{\scaleto{[16,21]}{3pt}}{\mathbf{18}}$ & $\overset{\scaleto{[13,18]}{3pt}}{\mathbf{16}}$ & $\overset{\scaleto{[17,25]}{3pt}}{\mathbf{21}}$ & $\overset{\scaleto{[28,37]}{3pt}}{\mathbf{32}}$ & $\overset{\scaleto{[11,16]}{3pt}}{\mathbf{14}}$ & $\overset{\scaleto{[2,6]}{3pt}}{\mathbf{4}}$ & $\overset{\scaleto{[18,25]}{3pt}}{\mathbf{22}}$ & $\overset{\scaleto{[0,0]}{3pt}}{\mathbf{0}}$ \\
 &  & $\overset{\scaleto{[2,7]}{3pt}}{\mathbf{4}}$ & $\overset{\scaleto{[0,0]}{3pt}}{\mathbf{0}}$ & $\overset{\scaleto{[46,64]}{3pt}}{\mathbf{56}}$ & $\overset{\scaleto{[44,56]}{3pt}}{\mathbf{50}}$ & $\overset{\scaleto{[0,1]}{3pt}}{\mathbf{0}}$ & $\overset{\scaleto{[36,47]}{3pt}}{\mathbf{42}}$ & $\overset{\scaleto{[44,52]}{3pt}}{\mathbf{48}}$ & $\overset{\scaleto{[36,46]}{3pt}}{\mathbf{41}}$ & $\overset{\scaleto{[51,61]}{3pt}}{\mathbf{56}}$ & $\overset{\scaleto{[64,74]}{3pt}}{\mathbf{69}}$ & $\overset{\scaleto{[72,80]}{3pt}}{\mathbf{76}}$ & $\overset{\scaleto{[34,44]}{3pt}}{\mathbf{39}}$ & $\overset{\scaleto{[8,14]}{3pt}}{\mathbf{11}}$ & $\overset{\scaleto{[78,85]}{3pt}}{\mathbf{82}}$ & $\overset{\scaleto{[0,0]}{3pt}}{\mathbf{0}}$ \\
\multirow{-6}{*}{\texttt{cube-double}} & \texttt{agg. (5 tasks)} & $\overset{\scaleto{[8,10]}{3pt}}{\mathbf{9}}$ & $\overset{\scaleto{[0,0]}{3pt}}{\mathbf{0}}$ & $\overset{\scaleto{[44,50]}{3pt}}{\mathbf{47}}$ & $\overset{\scaleto{[43,49]}{3pt}}{\mathbf{46}}$ & $\overset{\scaleto{[2,2]}{3pt}}{\mathbf{2}}$ & $\overset{\scaleto{[36,41]}{3pt}}{\mathbf{38}}$ & $\overset{\scaleto{[39,43]}{3pt}}{\mathbf{41}}$ & $\overset{\scaleto{[43,47]}{3pt}}{\mathbf{45}}$ & $\overset{\scaleto{[33,36]}{3pt}}{\mathbf{35}}$ & $\overset{\scaleto{[53,58]}{3pt}}{\mathbf{56}}$ & $\overset{\scaleto{[72,76]}{3pt}}{\mathbf{74}}$ & $\overset{\scaleto{[37,43]}{3pt}}{\mathbf{40}}$ & $\overset{\scaleto{[10,12]}{3pt}}{\mathbf{11}}$ & $\overset{\scaleto{[62,66]}{3pt}}{\mathbf{64}}$ & $\overset{\scaleto{[1,6]}{3pt}}{\mathbf{4}}$ \\
\midrule
 &  & $\overset{\scaleto{[2,5]}{3pt}}{\mathbf{4}}$ & $\overset{\scaleto{[0,3]}{3pt}}{\mathbf{2}}$ & $\overset{\scaleto{[8,19]}{3pt}}{\mathbf{14}}$ & $\overset{\scaleto{[10,19]}{3pt}}{\mathbf{15}}$ & $\overset{\scaleto{[0,1]}{3pt}}{\mathbf{0}}$ & $\overset{\scaleto{[33,46]}{3pt}}{\mathbf{39}}$ & $\overset{\scaleto{[34,46]}{3pt}}{\mathbf{40}}$ & $\overset{\scaleto{[33,47]}{3pt}}{\mathbf{40}}$ & $\overset{\scaleto{[17,31]}{3pt}}{\mathbf{24}}$ & $\overset{\scaleto{[13,18]}{3pt}}{\mathbf{16}}$ & $\overset{\scaleto{[4,10]}{3pt}}{\mathbf{7}}$ & $\overset{\scaleto{[8,13]}{3pt}}{\mathbf{11}}$ & $\overset{\scaleto{[1,2]}{3pt}}{\mathbf{2}}$ & $\overset{\scaleto{[10,17]}{3pt}}{\mathbf{14}}$ & $\overset{\scaleto{[2,5]}{3pt}}{\mathbf{4}}$ \\
 &  & $\overset{\scaleto{[0,0]}{3pt}}{\mathbf{0}}$ & $\overset{\scaleto{[0,0]}{3pt}}{\mathbf{0}}$ & $\overset{\scaleto{[0,1]}{3pt}}{\mathbf{0}}$ & $\overset{\scaleto{[0,1]}{3pt}}{\mathbf{0}}$ & $\overset{\scaleto{[0,0]}{3pt}}{\mathbf{0}}$ & $\overset{\scaleto{[0,0]}{3pt}}{\mathbf{0}}$ & $\overset{\scaleto{[0,0]}{3pt}}{\mathbf{0}}$ & $\overset{\scaleto{[0,0]}{3pt}}{\mathbf{0}}$ & $\overset{\scaleto{[0,1]}{3pt}}{\mathbf{0}}$ & $\overset{\scaleto{[0,0]}{3pt}}{\mathbf{0}}$ & $\overset{\scaleto{[0,0]}{3pt}}{\mathbf{0}}$ & $\overset{\scaleto{[0,1]}{3pt}}{\mathbf{0}}$ & $\overset{\scaleto{[0,0]}{3pt}}{\mathbf{0}}$ & $\overset{\scaleto{[0,2]}{3pt}}{\mathbf{1}}$ & $\overset{\scaleto{[0,0]}{3pt}}{\mathbf{0}}$ \\
 &  & $\overset{\scaleto{[0,0]}{3pt}}{\mathbf{0}}$ & $\overset{\scaleto{[0,0]}{3pt}}{\mathbf{0}}$ & $\overset{\scaleto{[1,5]}{3pt}}{\mathbf{3}}$ & $\overset{\scaleto{[0,1]}{3pt}}{\mathbf{1}}$ & $\overset{\scaleto{[0,0]}{3pt}}{\mathbf{0}}$ & $\overset{\scaleto{[0,1]}{3pt}}{\mathbf{1}}$ & $\overset{\scaleto{[0,1]}{3pt}}{\mathbf{0}}$ & $\overset{\scaleto{[0,0]}{3pt}}{\mathbf{0}}$ & $\overset{\scaleto{[0,0]}{3pt}}{\mathbf{0}}$ & $\overset{\scaleto{[0,1]}{3pt}}{\mathbf{0}}$ & $\overset{\scaleto{[0,1]}{3pt}}{\mathbf{0}}$ & $\overset{\scaleto{[0,1]}{3pt}}{\mathbf{0}}$ & $\overset{\scaleto{[0,0]}{3pt}}{\mathbf{0}}$ & $\overset{\scaleto{[2,3]}{3pt}}{\mathbf{2}}$ & $\overset{\scaleto{[0,0]}{3pt}}{\mathbf{0}}$ \\
 &  & $\overset{\scaleto{[0,0]}{3pt}}{\mathbf{0}}$ & $\overset{\scaleto{[0,0]}{3pt}}{\mathbf{0}}$ & $\overset{\scaleto{[0,0]}{3pt}}{\mathbf{0}}$ & $\overset{\scaleto{[0,0]}{3pt}}{\mathbf{0}}$ & $\overset{\scaleto{[0,0]}{3pt}}{\mathbf{0}}$ & $\overset{\scaleto{[0,0]}{3pt}}{\mathbf{0}}$ & $\overset{\scaleto{[0,0]}{3pt}}{\mathbf{0}}$ & $\overset{\scaleto{[0,0]}{3pt}}{\mathbf{0}}$ & $\overset{\scaleto{[0,0]}{3pt}}{\mathbf{0}}$ & $\overset{\scaleto{[0,0]}{3pt}}{\mathbf{0}}$ & $\overset{\scaleto{[0,0]}{3pt}}{\mathbf{0}}$ & $\overset{\scaleto{[0,1]}{3pt}}{\mathbf{0}}$ & $\overset{\scaleto{[0,0]}{3pt}}{\mathbf{0}}$ & $\overset{\scaleto{[0,1]}{3pt}}{\mathbf{0}}$ & $\overset{\scaleto{[0,0]}{3pt}}{\mathbf{0}}$ \\
 &  & $\overset{\scaleto{[0,0]}{3pt}}{\mathbf{0}}$ & $\overset{\scaleto{[0,0]}{3pt}}{\mathbf{0}}$ & $\overset{\scaleto{[0,0]}{3pt}}{\mathbf{0}}$ & $\overset{\scaleto{[0,0]}{3pt}}{\mathbf{0}}$ & $\overset{\scaleto{[0,0]}{3pt}}{\mathbf{0}}$ & $\overset{\scaleto{[0,0]}{3pt}}{\mathbf{0}}$ & $\overset{\scaleto{[0,0]}{3pt}}{\mathbf{0}}$ & $\overset{\scaleto{[0,0]}{3pt}}{\mathbf{0}}$ & $\overset{\scaleto{[0,0]}{3pt}}{\mathbf{0}}$ & $\overset{\scaleto{[0,0]}{3pt}}{\mathbf{0}}$ & $\overset{\scaleto{[0,0]}{3pt}}{\mathbf{0}}$ & $\overset{\scaleto{[0,0]}{3pt}}{\mathbf{0}}$ & $\overset{\scaleto{[0,0]}{3pt}}{\mathbf{0}}$ & $\overset{\scaleto{[0,0]}{3pt}}{\mathbf{0}}$ & $\overset{\scaleto{[0,0]}{3pt}}{\mathbf{0}}$ \\
\multirow{-6}{*}{\texttt{cube-triple}} & \texttt{agg. (5 tasks)} & $\overset{\scaleto{[0,1]}{3pt}}{\mathbf{1}}$ & $\overset{\scaleto{[0,1]}{3pt}}{\mathbf{0}}$ & $\overset{\scaleto{[2,5]}{3pt}}{\mathbf{3}}$ & $\overset{\scaleto{[2,4]}{3pt}}{\mathbf{3}}$ & $\overset{\scaleto{[0,0]}{3pt}}{\mathbf{0}}$ & $\overset{\scaleto{[7,9]}{3pt}}{\mathbf{8}}$ & $\overset{\scaleto{[7,9]}{3pt}}{\mathbf{8}}$ & $\overset{\scaleto{[7,9]}{3pt}}{\mathbf{8}}$ & $\overset{\scaleto{[3,6]}{3pt}}{\mathbf{5}}$ & $\overset{\scaleto{[3,4]}{3pt}}{\mathbf{3}}$ & $\overset{\scaleto{[1,2]}{3pt}}{\mathbf{1}}$ & $\overset{\scaleto{[2,3]}{3pt}}{\mathbf{2}}$ & $\overset{\scaleto{[0,0]}{3pt}}{\mathbf{0}}$ & $\overset{\scaleto{[3,4]}{3pt}}{\mathbf{3}}$ & $\overset{\scaleto{[0,1]}{3pt}}{\mathbf{1}}$ \\
\midrule
\texttt{all} & \texttt{agg. (40 tasks)} & $\overset{\scaleto{[37,60]}{3pt}}{\mathbf{49}}$ & $\overset{\scaleto{[5,20]}{3pt}}{\mathbf{12}}$ & $\overset{\scaleto{[32,56]}{3pt}}{\mathbf{44}}$ & $\overset{\scaleto{[31,56]}{3pt}}{\mathbf{44}}$ & $\overset{\scaleto{[6,16]}{3pt}}{\mathbf{10}}$ & $\overset{\scaleto{[24,44]}{3pt}}{\mathbf{34}}$ & $\overset{\scaleto{[30,56]}{3pt}}{\mathbf{43}}$ & $\overset{\scaleto{[34,57]}{3pt}}{\mathbf{46}}$ & $\overset{\scaleto{[33,57]}{3pt}}{\mathbf{45}}$ & $\overset{\scaleto{[37,62]}{3pt}}{\mathbf{50}}$ & $\overset{\scaleto{[35,61]}{3pt}}{\mathbf{48}}$ & $\overset{\scaleto{[24,48]}{3pt}}{\mathbf{36}}$ & $\overset{\scaleto{[31,55]}{3pt}}{\mathbf{43}}$ & $\overset{\scaleto{[42,68]}{3pt}}{\mathbf{55}}$ & $\overset{\scaleto{[55,65]}{3pt}}{\mathbf{60}}$ \\
\bottomrule
\end{tabular}}}
    \caption{\textbf{Offline results, 8 domains.} }
    \label{tab:full-results-8dom}
\end{table}

\end{document}